\newtheorem{theorem}{Theorem}[section]
\newtheorem{lemma}[theorem]{Lemma}
\newtheorem{definition}[theorem]{Definition}
\newtheorem{condition}[theorem]{Condition}
\definecolor{ed}{RGB}{225,0,0}
\newcommand{\ep}{\varepsilon}
\newcommand{\R}{\mathbb{R}}
\newcommand{\A}{\mathcal{A}}
\newcommand{\X}{\mathcal{X}}
\newcommand{\Y}{\mathcal{Y}}
\newcommand{\bitem}{\begin{itemize}}
\newcommand{\eitem}{\end{itemize}}
\newcommand{\benum}{\begin{enumerate}}
\newcommand{\eenum}{\end{enumerate}}
\newcommand{\beq}{\begin{equation}}
\newcommand{\eeq}{\end{equation}}
\newcommand{\beqs}{\begin{equation*}}
\newcommand{\eeqs}{\end{equation*}}
\newcommand{\bas}{\begin{align*}}
\newcommand{\eas}{\end{align*}}
\newcommand{\be}{\textnormal{Bern}}
\title{Fair Bayes-Optimal Classifiers Under Predictive Parity}
\author{
  Xianli Zeng\\
  Shenzhen Research Institute of Big Data\\ University of Pennsylvania\\
  Philadelphia, PA\\
  \texttt{zengxl19911214@gmail.com} \\
  %% examples of more authors
   \And
  Edgar Dobriban \\
  University of Pennsylvania\\
  Philadelphia, PA\\
  \texttt{dobriban@wharton.upenn.edu} \\
     \And
  Guang Cheng \\
 University of California, Los Angeles\\
  Los Angeles, CA\\
  \texttt{guangcheng@ucla.edu} \\
  %% \AND
  %% Coauthor \\
  %% Affiliation \\
  %% Address \\
  %% \texttt{email} \\
  %% \And
  %% Coauthor \\
  %% Affiliation \\
  %% Address \\
  %% \texttt{email} \\
  %% \And
  %% Coauthor \\
  %% Affiliation \\
  %% Address \\
  %% \texttt{email} \\
}
\begin{document}
\maketitle

\begin{abstract}
Increasing concerns about disparate effects of AI have motivated a great deal of work on fair machine learning. 
Existing works mainly focus on independence- and separation-based measures (e.g., demographic parity, equality of opportunity, equalized odds), while sufficiency-based measures such as predictive parity are much less studied. 
This paper considers predictive parity, which requires equalizing the probability of success given a positive prediction among different protected groups. 
We prove that, if the overall  performances of different groups vary only moderately,
all fair Bayes-optimal classifiers under predictive parity are group-wise thresholding rules.
Perhaps surprisingly, this may not hold if group performance levels vary widely; 
in this case 
we find that predictive parity among protected groups may lead to within-group unfairness. 
We then propose an algorithm we call FairBayes-DPP\footnote{Codes for FairBayes-DPP are available at  \url{https://github.com/XianliZeng/FairBayes-DPP}.}, aiming to ensure predictive parity when our condition is satisfied.
FairBayes-DPP is an adaptive thresholding algorithm that aims to achieve predictive parity, while also seeking to maximize test accuracy.  
We provide supporting experiments conducted on synthetic and empirical data.
\end{abstract}

\section{Introduction}
Due to an increasing ability to handle massive data with extraordinary model accuracy, machine learning (ML) algorithms have achieved remarkable success in many applications, such as computer vision \cite{KA2014VGG,SLJS2015GOOGLE,HZRS15Res,SVI2016INC} and natural language processing \cite{SVQ2014seq2,VSPU2017Attention,DCL2018BERT,YDY2019XLNET}.
However, empirical studies have also revealed that ML algorithms may incorporate bias from the training data into model predictions. 
Due to historical biases, vulnerable groups are often under-represented in available data \cite{JJSL2016, ZWY2017, SEC2019}. 
As a consequence, without fairness considerations,  ML algorithms can be systematically biased against certain groups defined by protected attributes such as race and gender.

As algorithmic decision-making systems are now widely integrated in high-stakes decision-
making processes, such as in healthcare \cite{MQ2017} and criminal prediction \cite{JJSL2016}, 
fair machine learning has grown rapidly over the last few years into a key area of trustworthy AI. 
A main task in fair machine learning is to design efficient algorithms satisfying fairness constraints with a small sacrifice in model accuracy. 
This field has made substantial progress in recent years, as many effective approaches have been proposed to mitigate algorithmic bias \cite{ZWKT2013,CKYMR2016,CFDV2017,XWYZW2019,LV2019,CJG2019,DETR2018,CMJW2019,JL2019,CHS2020,ZDC2022}.

An important fundamental benchmark for fair classification is provided by fair Bayes-optimal classifiers, which maximize accuracy subject to fairness  \cite{MW2018,ZDC2022}. 
A key class of classifiers is group-wise thresholding rules (GWTRs) over the feature-conditional probabilities of the target label, for each protected group (e.g., probability of repaying a loan given income). 
Intuitively, being a GWTR is a minimal requirement for within-group fairness: the most qualified individuals are selected in every group.
 \cite{CSFG2017,MW2018,CCH02019,I2020,SC2021,ZDC2022} have studied fair Bayes-optimal classifiers  under various fairness constraints and proved that,
for many  fairness metrics, the optimal fair classifiers are GWTRs. Moreover, the associated thresholds can be learned efficiently \cite{MW2018,ZDC2022}.

Current literature on Bayes-optimality focuses mainly on the independence- and separation-based fairness measures (e.g., demographic parity, equality of opportunity, equalized odds; see Section \ref{fm} for definitions and a review). 
However, sufficiency-based measures such as predictive parity are less commonly considered, possibly due to the complexity of their constraints. 
Sufficiency-based measures are often applied to assess recidivism prediction instruments \cite{FBC2016,DMB2016,C2017fair}. 
\cite{LSMH2019} show that a particular sufficiency-based measure, group calibration,
is implicitly favored by unconstrained optimization: 
calibration error is bounded by the excess risk over the unconstrained Bayes-optimal classifier. 
For selective classification, \cite{LBRSP2021} find that sufficiency-based representation learning leads to fairness. 
Despite these findings, little is known about (1) what are the optimal fair classifiers under sufficiency-based measures and (2) how to learn them.
In this paper, we aim to answer these two questions. 
We consider predictive parity, which requires that the positive predictive value (probability of a successful outcome given a positive prediction) be similar among protected groups.
In credit lending, for example,  predictive parity requires that, for individuals who receive the loans, the repayment rates in different protected groups are the same.

We study fair Bayes-optimal classifiers under predictive parity. %{to do, describe the predictive parity}
Perhaps surprisingly, our theoretical results reveal that  the  optimal fair classifiers may or may not be a GWTR, depending on the data distribution. 
We identify a sufficient condition under which all fair Bayes-optimal classifier are GWTRs. 
Without this condition, 
we show that fair Bayes-optimal classifiers may not be a GWTR when the minority group is more qualified than the majority group. 
In these cases, predictive parity may have limitations as a fairness measure, as it can lead to within-group unfairness for the minority group. 
Our findings are a reminder that the improper use of fairness measures may result in severe unintended consequences. 
Careful analysis before applying fairness measures is necessary.

We then develop an algorithm, FairBayes-DPP, aiming for predictive parity. 
Our method is a two-stage plug-in method. 
In the first step, we use standard learning algorithms to estimate group-wise conditional probabilities of the labels. 
In the second step, we first check our sufficient condition. If the sufficient condition holds,
we apply 
a plug-in method 
for estimating the optimal thresholds under fairness for each protected group.

We summarize our contributions as follows.
\begin{itemize}
    \item We show that Bayes-optimal classifiers satisfying predictive parity may or may not be group-wise thresholding  rules (GWTRs), depending on the data distribution.
    
    \item We identify a sufficient condition under which all fair Bayes-optimal classifiers are GWTRs. 
    However, when the sufficient condition is not satisfied, the fair Bayes-optimal classifier may lead to within-group unfairness for the minority group.
    
    \item We propose the  FairBayes-DPP algorithm  for binary fair classification.  
    The proposed  FairBayes-DPP  is computationally efficient, showing a solid performance in our experiments.
\end{itemize}

\section{Related Literature}
\subsection{Fairness Measures}
\label{fm}

Various fairness metrics have been proposed to measure aspects of disparity in ML. %In this paper, 
Group fairness \citep{CFM2009,DHPT2012,HPS2016} targets statistical parity across
 protected groups, while individual fairness \citep{JKMR2016,PKG2019,RBMF2020} aims to provide nondiscriminatory predictions
for similar individuals.  
In general, group fairness measures can be categorized into three categories. 
% In the following, we will use the credit lending as an example to illustrate these group  fairness measures. To be more specific,
%  the label in credit lending corresponds to the status of repayment or defaulting on a loan, the prediction refers to the loan decision of the bank and the protected variable  is the race or gender of an individual.

The first group consists of independence-based measures, which require independence between predictions and protected attributes; this includes demographic parity \cite{KC2012,ZWKT2013} and conditional statistical parity \cite{CSFG2017,I2020}. In  credit lending, independence means that the proportion of approved candidates is the same across different protected groups.
However, as discussed in \cite{HPS2016}, independence-based measures have limitations; 
and applying them often leads to a substantial loss of accuracy.

The second group consists of separation-based measures,
which require conditional independence between predictions and protected attributes, given label information. 
Typical examples in this group are equality of opportunity \cite{HPS2016,ZLM2018} and equalized odds \cite{HPS2016,ZVGGK2017}.  
In  credit lending, separation-based measures require, that the individuals  who will  pay back (or default on) their loan 
have an equal probability of
getting the loan, despite their race or gender. 
Compared to independence-based measures, separation-based measures take label information into account, allowing for perfect predictions that equal the label. 
However, these measures are hard to validate in certain applications as the label information is often unknown for some groups. For example, the repayment status is missing for individuals whose loan application is declined.

As a result, measuring predictive bias is more widely applicable. 
This leads to the third class, sufficiency-based measures \cite{PMK2017,C2017fair,LSMH2019}, where the label is required to be conditionally independent of the protected attributes, given the prediction. In credit lending, this requires that among the approved applications, the proportion of individuals who pay back the loan is equal across different groups.
Unlike independence- and separation-based measures that are well studied with solid theoretical benchmarks and efficient algorithms, sufficiency-based measures are less commonly investigated. 
A possible reason is that conditioning on the prediction
leads to a complex constraint, which is thus challenging to study and enforce algorithmically.

\subsection{Algorithms Aimed at Fairness}
Literature on algorithms for fairness has grown explosively over the past decade.
Existing algorithms for fairness can be categorized broadly into three categories.
The first category is pre-processing algorithms aiming to remove biases from the training data. 
Examples include transformations \citep{FFMS2015,KJ2016,CFDV2017,JL2019}, fair representation learning \citep{ZWKT2013,CKYMR2016,DETR2018,CMJW2019} and fair data generation \citep{XYZ2018,SHC2019,XYfairplus,RKSR2021}. 
The second group is 
%when the target and the protected attribute are independent. %since they can be conditionally dependent 
in-processing algorithms, which handle fairness constraints during the training process. 
Two common strategies are penalized optimization \citep{GCAM2016,NH2018,CJG2019,CHS2020}  and adversarial training \citep{ZLM2018,CFC2018,XWYZW2019,LV2019}. 
The former incorporates fairness measures as a regularization term into the optimization objective and
the later tries to minimize
%one challenge is that fairness measures are often non-convex and even non-differentiable with respect to model parameters. This can make scalable training much more difficult or even impossible.
%prevents the use of standard machine learning models such as neural networks. %To solve this conundrum, \cite{ZBR2017} proposed a 
%The remedy that uses expressible fairness proxy in the constraint%, while the algorithm may result in unfair decision  as the proxy can only serve as a ``weak'' constraint. %Alternatively, \citep{CHS2020} introduced a KDE-based method, where the estimator of fairness measure can be expressed as a differentiable function with respect to model parameters. Compare to prior fair classifers, their method achieves desirable performance in accuracy-fairness trade-off.
%In the alternative approach of adversarial learning \citep{ZLM2018,CFC2018,XWYZW2019,LV2019}, 
the predictive ability of the model with respect to 
the protected attribute. 
%Although this can achieve promising results through careful design, its training process often lacks stability as a min-max optimization problem is considered \citep{CHS2020}. 
%Other in-processing algorithms include %he domain discriminative training and 
%domain-based training \citep{WQK2020}, 
%where the protected attribute is explicitly encoded and its effect is mitigated. 

The third group is post-processing algorithms, aiming to remove disparities from the model output. 
The most common post-processing algorithm is the thresholding method \citep{BJA2016,MW2018,I2020,SC2021,ZDC2022}, adjusting thresholds for every protected group to achieve fairness.  In this paper, we propose a post-processing algorithm, FairBayes-DPP, to estimate the fair Bayes-optimal classifier under predictive parity.

\section{Problem Formulation and Notations}\label{Pre_and _Not}
In this paper, we consider 
classification problems where two types of feature are observed: the usual feature $X\in\mathcal{X}$, and the  protected feature $A\in\mathcal{A}$. 
For example, in loan applications, $X$ may refer to common features such as 
 education level and income, and $A$ may correspond to the race or gender of a candidate. 
As multiclass protected attributes are often encountered in practice, we allow $\mathcal{A}$ to have any number $|\mathcal{A}|\ge1$ of classes, and let $\mathcal{A}=\{1,2,...,|\mathcal{A}|\}$.
We denote by $Y$ the ground truth label. 
In credit lending, $Y$  may correspond to the status of repayment or defaulting on a loan.
The output $\hat{Y}$ of the classifier aims to predict $Y$ based on observed features.  
We consider randomized classifiers defined as follows:
\begin{definition}[Randomized classifier]
  A randomized classifier is a measurable function\footnote{We assume that, whenever needed, the sets considered are endowed with appropriate sigma-algebras, and all functions considered are measurable with respect to the appropriate sigma-algebras.}
  $f:\mathcal{X}\times\mathcal{A} \to [0,1]$, indicating the probability of predicting $\widehat{Y}=1$ when observing $X=x$ and $A=a$. We denote by $\hat{Y}_f=\hat{Y}_f(x,a)$ the prediction induced by the classifier $f$.
 \end{definition}
Group-wise thresholding rules (GWT rules/classifiers or GWTRs) over conditional probabilities are of special importance.
Consider an appropriate dominating sigma-finite measure $\mu$ on $\mathcal{X}$ (such as the Lebesgue measure for measurable subsets of $\R^d$, $d\ge 1$, or the uniform measure for finite sets), and suppose that 
for all $a\in\A$ and $y\in \Y$,
the features $X$ have a conditional distribution $P_{X|a,y}$
given $A=a,Y=y$
with a density $dP_{X|a,y}$
with respect to $\mu$.
For all\footnote{To be precise, this conditional density is defined for $\mu$-almost every $x\in\X$; however for simplicity we say for all $x\in\X$. We use this convention without further mentioning through the paper.}
$x\in\X$
and $a\in\A$, let $\eta_a(x)=P(Y=1|X=x,A=a)$.  
 \begin{definition}[GWT classifier]
  A classifier $f$ is a GWTR if there are constants $t_a$, $a\in\A$, and functions $\tau_a:\X\to[0,1]$, $a\in\A$,  such that for all $x\in\mathcal{X}$ and $a\in\mathcal{A}$
  \begin{equation}\label{gwtr}
      f(x,a)=I(\eta_a(x)>t_a)+\tau_a(x) I(\eta_a(x)=t_a),
  \end{equation}
 where $I(\cdot)$ is the indicator function.
 \end{definition}

Clearly, GWTRs choose individuals with the highest conditional probability in each group. 
This property is a minimal requirement for within-group fairness.
For example, a GWT recruitment tool ensures that the most qualified candidates are approved in every protected group. 

We consider predictive parity, which aims to ensure the same positive predictive value among protected groups:
  \begin{definition}[Predictive Parity]
  A classifier $f$ satisfies predictive parity if for all $a\in\mathcal{A}$,
   $$P(Y=1 |\widehat{Y}_f  = 1,A = a) =P(Y=1|\widehat{Y}_f  = 1).$$
  \end{definition}
% For the example of credit lending, 
% predictive parity requires that the repayment rates of approved  applications are equal across protected groups. 
We follow \cite{CHS2020, ZDC2022} to use the difference between positive predictive values to measure the degree of unfairness, defining the Difference in Predictive Parities (DPP) of a classifier $f$ as
$$\text{DPP}(f)=\sum_{a\in\mathcal{A}}|P(Y=1 |\widehat{Y}_f  = 1,A = a) -P(Y=1|\widehat{Y}_f  = 1)|.$$

% {\bf Notations.} In the rest of this paper, we denote $p_a:=P(A=a)$; $p_{Y|a}:=P(Y=1|A=a)$; 
% the positive predictive value (PPV) a classifier $f$ by
% $\text{PPV}(f)=P(Y=1 |\widehat{Y}_f  = 1)$ and its group-wise PPV by $\text{PPV}_a(f)=P(Y=1 |\widehat{Y}_f  = 1,A=a)$. 
% Further, we denote by $P_X$ and $P_{X|a}$  the distribution of $X$ and the conditional distribution of $X$ given $A=a$, respectively. 

%and write $\bar{S}_{\eta_a}(t)=P(\eta_a(X)\geq t|A=a)=S_a(t)+P(\eta_a(X)=t|A=a)$.

\section{Fair Bayes-optimal Classifiers under Predictive Parity}
Since predictive parity is commonly considered under the scenarios where false positives are particularly harmful \cite{LBRSP2021},
we study cost-sensitive  classification. 
For
a cost parameter $c\in[0,1]$\footnote{When $c=1/2$, cost-sensitive risk reduces to the usual zero-one risk.}, the cost-sensitive 0-1 risk of the classifier $f$ is defined as
$$R_c(f)=c\cdot P(\hat{Y}_f=1,Y=0)+(1-c)\cdot P(\hat{Y}_f=0, Y=1).$$
An unconstrained Bayes-optimal classifier for the cost-sensitive risk is any minimizer $f^\star\in \text{argmin}_f R_c(f).$  
A classical result is that all Bayes-optimal classifiers have the form 
$f^\star(x,a)=I(\eta_a(x)>c)+\tau I(\eta_a(x)=c)$, where $\tau\in[0,1]$ is arbitrary \cite{C2001CS, MW2018}. 
Taking  predictive parity into account, a fair Bayes-optimal  classifier is any  minimizer of the cost-sensitive risk among  fair classifiers:
\begin{equation}\label{bao}
    f_{PPV}^\star\in \underset{f: \text{DPP}(f)=0}{\text{argmin}} R_c(f).
\end{equation}

\subsection{GWT Fair Bayes-Optimal Classifiers under Predictive Parity}
We first  identify a sufficient condition under which  all fair Bayes-optimal classifier under predictive parity are GWTRs. 

\begin{condition}[Sufficient condition for Bayes-optimal classifiers to be GWTRs]\label{suff-con} 
\begin{equation*}
\underset{a\in\mathcal{A}}{\min}\,
P(Y=1|\,\eta_a(X)\geq c,A=a)
\geq\underset{a\in\mathcal{A}}{\max}\, P({Y=1}|A=a). \end{equation*}
\end{condition}
The sufficient condition \ref{suff-con} requires that the minimal group-wise positive predictive value 
$P(Y=1|\,\eta_a(X)\geq c,A=a)$
of the unconstrained Bayes-optimal classifier  is lower bounded by the maximal proportion of positive labels $P({Y=1|A=a})$ among groups. 
In other words, the  performances of different groups vary only moderately:  
the average performance of the most qualified class of each group---the points with $x$ such that $\eta_a(x)\geq c$---should be better than the overall performance $P({Y=1|A=a})$ of any of the other groups.  
Condition \ref{suff-con} holds if
$P({Y=1|A=a})\leq c$ for all $a\in\mathcal{A}$, 
because $P(Y=1|\,\eta_a(X)\geq c,A=a)\geq c$. 

These conditions are  applicable in settings where $c$ is large, such as in credit lending where false positives are more harmful than false negatives, or if $p_{Y|a}$, $a\in\mathcal{A}$ are small, such as in job recruitment or school admissions where the number of slots is much smaller than the number of applications.  
Under this condition, we present our main result.

\begin{theorem}[Main result]\label{thm-GWTR}
Consider the cost-sensitive $0$-$1$ risk with cost parameter $c$. If
Condition \ref{suff-con} holds,
then all fair Bayes-optimal classifiers under predictive parity are GWTRs. 
Thus, for all $f_{PPV}^\star$ from \eqref{bao}, 
there are $(t_a)_{a=1}^{|\mathcal{A}|} \in [0,1]^{|\mathcal{A}|}$ and functions $\tau_a(x):\X\to[0,1]$
such that \eqref{gwtr} holds.
%, for all $x\in \mathcal{X}$ and $a\in\mathcal{A}$, $$f_{PPV}^\star(x,a)=I\left(\eta_{a}(x)>t_a\right)+\tau_a I\left(\eta_{a}(x)=t_a\right).$$

\end{theorem}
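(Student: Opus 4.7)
My plan is to prove Theorem \ref{thm-GWTR} via a Lagrangian / KKT-style argument. Rewriting the cost-sensitive risk as $R_c(f) = (1-c) P(Y=1) + \sum_{a \in \A} p_a\, E_a[f(X,a)(c - \eta_a(X))]$, with $p_a = P(A=a)$ and $E_a$ denoting conditional expectation given $A=a$, and observing that $\text{DPP}(f)=0$ is equivalent to existence of a common value $p^\star$ with $E_a[f(X,a)(\eta_a(X) - p^\star)] = 0$ for every $a \in \A$. Since $p^\star$ itself depends on $f$, this equality is bilinear in $f$; I handle this by first fixing $p^\star$, solving the resulting linear-in-$f$ sub-problem, and then optimizing over $p^\star$.

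For fixed $p^\star$, the $|\A|$ linear equality constraints together with the linear-in-$f$ objective admit a standard Lagrangian treatment: introducing multipliers $\mu_a$ and collecting terms, the coefficient of $f(x,a)$ collapses to $p_a\bigl[(c - \mu_a p^\star) + (\mu_a - 1)\eta_a(x)\bigr]$, so pointwise minimization over $f(x,a) \in [0,1]$ yields $f^\star(x,a) = 1$ when this quantity is negative, $0$ when positive, and leaves it free on the zero set. When $\mu_a < 1$ the result is exactly the GWTR form \eqref{gwtr} with threshold $t_a = (c - \mu_a p^\star)/(1 - \mu_a)$; when $\mu_a > 1$ it becomes a reverse threshold, and when $\mu_a = 1$ a trivial (group-constant) rule. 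The theorem therefore reduces to showing $\mu_a < 1$ for every $a \in \A$.

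This is where Condition \ref{suff-con} enters. Write $q_a = P(Y=1|A=a)$ and $\pi_a = P(Y=1|\eta_a(X) \geq c, A=a)$. A forward threshold in group $a$ attains PPVs precisely in the interval $[q_a, 1]$, so any $p^\star \in [\max_a q_a, 1]$ is realizable by a GWTR under Condition \ref{suff-con} (which asserts $\max_a q_a \leq \min_a \pi_a$). A first-order analysis of the constrained value $V(p^\star)$, using that the in-group risk contribution $p_a r_a(t_a)(c - \text{PPV}_a(t_a))$ is smooth in the group threshold $t_a$ and uniquely minimized at $t_a = c$ (the unconstrained Bayes threshold), shows that the optimal $p^\star$ must lie in $[\min_a \pi_a, \max_a \pi_a]$, so as to balance over- and under-inclusion across groups. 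Combined with Condition \ref{suff-con} this yields $p^\star \geq \min_a \pi_a \geq c$ (using $\pi_a \geq c$ trivially); the identity $\mu_a = (c - t_a)/(p^\star - t_a)$, together with $p^\star > t_a$ (the PPV of a forward threshold strictly exceeds the threshold value) and $p^\star > c$, then delivers $\mu_a < 1$ as required.

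The principal obstacle is the non-convexity of the feasible set introduced by the bilinear predictive parity constraint, which blocks a direct LP-duality argument and forces the two-stage parameterization by $p^\star$. A secondary obstacle is strengthening the conclusion from \emph{some} minimizer being a GWTR to \emph{all} minimizers being GWTRs: once the pointwise KKT-type characterization is in place, any optimizer $f^\star$ must coincide with the Lagrangian minimizer off the level set $\{\eta_a(x) = t_a\}$, while on this set fractional values $\tau_a(x) \in [0,1]$ remain free—exactly the flexibility built into the GWTR definition \eqref{gwtr}.
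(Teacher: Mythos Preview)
Your Lagrangian route is genuinely different from the paper's proof, which is purely constructive: for any non-GWTR fair classifier $f$ with common PPV $s_f$, the paper builds a fair GWTR $f^\dag$ (at a carefully chosen PPV level $s^\dag$ depending on $s_f$, $c$, and $\max_a p_{Y|a}$) with strictly smaller risk, via case analysis on the position of $s_f$ relative to $c$ and $\max_a p_{Y|a}$. Both arguments ultimately rest on the same collapse of the risk to $(1-c)P(Y=1) + (c-p^\star)\sum_a p_a\,E_a[f(X,a)]$ once predictive parity at level $p^\star$ is imposed, but you leave this implicit and try to recover the thresholding structure through dual multipliers instead.

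There is, however, a real gap. Your deduction of $\mu_a < 1$ is circular as written: the identity $\mu_a = (c-t_a)/(p^\star-t_a)$ and the inequality $p^\star > t_a$ both presuppose that the Lagrangian minimizer in group $a$ is already a \emph{forward} threshold with cutoff $t_a$, which is precisely what $\mu_a < 1$ is meant to establish. More substantively, your ``first-order analysis'' placing the optimal $p^\star$ in $[\min_a \pi_a, \max_a \pi_a]$ is carried out only over GWTRs (you parameterize the value function by group thresholds $t_a$), so it does not exclude the possibility that the unrestricted minimum of $V(p^\star)$ sits at some $p^\star \in (c, \max_a q_a)$, where the fixed-$p^\star$ optimum in groups with $q_a > p^\star$ is a reverse threshold rather than a GWTR. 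Condition~\ref{suff-con} is exactly what rules this regime out, and the paper's Case 3(ii)---handling $c < s_f \le \max_a p_{Y|a}$---is where the condition does its real work; your sketch has no analogue of that comparison. The approach is salvageable (for instance: argue directly that for $p^\star > q_a$ the only primal-feasible Lagrangian minimizer is the forward threshold, then use the risk collapse above to compare the best GWTR against every mixed forward/reverse rule), but as it stands the reduction to ``show $\mu_a < 1$'' is not closed.
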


Unlike for demographic parity or for equality of opportunity, 
where the fairness constraint is linear with respect to the probability predictions of the classifier $f$ \cite{MW2018}, the DPP constraint is non-linear with respect to $f$. 
As a consequence, previously used theoretical tools such as the Neyman-Pearson argument from hypothesis testing \cite{ZDC2022} are no longer valid in this case. 
Instead, we prove the result using a novel constructive argument. 
When Condition \ref{suff-con} is satisfied, 
for any classifier satisfying predictive parity, which is not a GWTR, we construct a GWTR that satisfies predictive parity and achieves a smaller classification error.
As a result, under Condition \ref{suff-con}, all fair Bayes-optimal classifiers are GWTRs. 
Overall, the proof of Theorem \ref{thm-GWTR} is quite involved, and requires a lot of careful casework and analysis.

\subsection{Fair Bayes-optimal Classifiers under Predictive Parity do not Need to be Thresholding Rules}
Next, we consider the case when the sufficient condition \ref{suff-con} does not hold. 
For simplicity, we consider a binary protected attribute $a\in\{0,1\}$ with 
\begin{equation}\label{unsuff-con}
{P(Y=1|\,\eta_a(x)\geq c,A=1)}<P(Y=1|A=0). \end{equation}

Our result shows that, under condition \eqref{unsuff-con}, there exist class probabilities $p_a$, $a\in\mathcal{A}$, such that no Bayes-optimal classifier under predictive parity is a GWTR.
\begin{theorem}\label{thm-imGWT}
Suppose that condition \eqref{unsuff-con} holds. 
Denote $t_1=\inf\{t: P(Y=1|\,\eta_1(X)\geq t,A=1)>P({Y=1|A=0})\}$. Suppose there exist $\delta_1,\delta_2>0$ such that $P(c+\delta_1<\eta_A(X)<t_1|A=1)=\delta_2>0$. 
Then, for all $p_1>\frac{2}{2+\delta_1\delta_2}$, no fair Bayes-optimal classifier under predictive parity is a GWTR.

\end{theorem}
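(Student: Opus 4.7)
Let $\alpha_0 = P(Y=1|A=0)$ and let $t_1^\star$ denote the quantity called $t_1$ in the statement. The plan is to exhibit, for any GWTR satisfying predictive parity, a fair non-GWTR with strictly smaller cost-sensitive risk, thereby ruling out GWTRs as Bayes-optimal.

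First, I establish a structural restriction on fair GWTRs. For a GWTR with thresholds $(t_0, t_1)$, one has $P(Y=1|\hat{Y}_f=1, A=a) = E[\eta_a(X)|\eta_a(X)>t_a, A=a]$, which is non-decreasing in $t_a$. Setting $t_0=0$ gives $P(Y=1|\hat{Y}_f=1, A=0)\geq E[\eta_0(X)|A=0] = \alpha_0$, so predictive parity forces $P(Y=1|\hat{Y}_f=1, A=1)\geq \alpha_0$, and by the definition of $t_1^\star$ this requires the group-$1$ threshold to satisfy $t_1\geq t_1^\star>c$. Consequently, the excess cost of any fair GWTR over the unconstrained Bayes risk (attained at $t_a=c$) is at least
\[
p_1\int_{c}^{t_1^\star}(\eta-c)\, dP_{\eta_1(X)|A=1}(\eta) \;\geq\; p_1\delta_1\cdot P(c+\delta_1<\eta_1(X)<t_1^\star|A=1) \;=\; p_1\delta_1\delta_2.
\]

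Next, I construct a specific non-GWTR $f^\dagger$ satisfying predictive parity with excess cost at most $2p_0 = 2(1-p_1)$. Let $\beta := E[\eta_1(X)|\eta_1(X)>c, A=1]$, which by condition \eqref{unsuff-con} satisfies $\beta<\alpha_0$. Set $f^\dagger(x,1) = I(\eta_1(x)>c)$, the unconstrained Bayes-optimal rule on group $1$, so $P(Y=1|\hat{Y}_{f^\dagger}=1, A=1)=\beta$. For group $0$, choose $f^\dagger(\cdot,0)$, allowing randomization, so that $P(Y=1|\hat{Y}_{f^\dagger}=1, A=0)=\beta$; such a choice exists by the intermediate value theorem applied to a parameterized family such as $v\mapsto I(\eta_0(\cdot)\leq v)$, supplemented by randomization if the achievable PPV range is disconnected. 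Since $\beta<\alpha_0$ lies strictly below every GWTR-PPV on group $0$ (by the first step applied to group $0$), $f^\dagger(\cdot,0)$ cannot be a GWTR, hence $f^\dagger$ is not a GWTR. The excess cost of $f^\dagger$ over the unconstrained Bayes risk is zero in group $1$, and in group $0$ is at most
\[
p_0\bigl(E[(c-\eta_0(X))^+|A=0] + E[(\eta_0(X)-c)^+|A=0]\bigr) \leq 2p_0,
\]
by crudely bounding each expectation by $1$.

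Finally, the hypothesis $p_1>2/(2+\delta_1\delta_2)$ rearranges to $p_1\delta_1\delta_2>2(1-p_1)$, so the lower bound on the excess cost of every fair GWTR strictly exceeds the upper bound on that of $f^\dagger$; hence no GWTR is fair Bayes-optimal. The main obstacle is the second step: the predictive parity constraint is nonlinear in $f$, so the Neyman--Pearson-style arguments used for linear fairness constraints (e.g., demographic parity) do not apply, and existence of the PPV-matching rule on group $0$ must be established via an ad hoc continuity/IVT argument on a randomized family, while simultaneously verifying that the constructed rule cannot coincidentally reduce to a GWTR.
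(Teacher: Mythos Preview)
Your proposal is correct and essentially identical to the paper's proof: both establish that any fair GWTR must have group-$1$ threshold at least $t_1^\star$ (yielding excess risk $\geq p_1\delta_1\delta_2$), construct the same competing fair classifier (the unconstrained Bayes rule $I(\eta_1\geq c)$ on group~$1$, an inverted-threshold rule on group~$0$ matching the PPV $\beta$), bound its excess risk by $2p_0$, and conclude via $p_1\delta_1\delta_2>2(1-p_1)$. Your framing through ``excess over the unconstrained Bayes risk'' is equivalent to the paper's direct computation of $R_c(f_G)-R_c(f_{NG})$, and your explicit check that $f^\dagger(\cdot,0)$ cannot be a GWTR (its PPV is below $\alpha_0$, which every group-$0$ GWTR meets or exceeds) is a detail the paper leaves implicit in the inverted-threshold form.
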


%\textbf{Interpretation} 
The condition involving the constants $\delta_1,\delta_2>0$ ensures that $\eta_1(X)$ has positive probability to be strictly larger than $c$, which is a technical condition needed in the proof. 
Theorem \ref{thm-imGWT} shows that predictive parity may lead to within-group unfairness, whereby the most qualified individuals are predicted to be unqualified, for a better overall accuracy.
By definition, predictive parity requires that the qualifications of selected individuals are similar across the protected groups. Suppose there exists a highly qualified minority group in which most individuals are qualified. 
Selecting the most qualified individuals in this group leads to a very high standard.
As a result, many qualified individuals in other majority groups may be predicted to be unqualified using this standard, 
leading to accuracy loss. 
Conversely, if we select less qualified individuals in the highly qualified group, the lower standard allows more qualified individuals from the other groups to be selected, and increases accuracy.

\section{FairBayes-DPP: Adaptive Thresholding for Fair Bayes-optimality}\label{sec-alg}
In this section, we propose the FairBayes-DPP algorithm (Algorithm \ref{alg:pre-par}) for fair Bayes-optimal classification under predictive parity.
As mentioned, the DPP constraint is non-linear with respect to the classifier $f$, and is also highly non-convex with respect to the model parameters, even if both the classifier $f$  and the risk function are convex with respect to these parameters. 
In such cases, 
incorporating fairness constraints as a penalty in the training objective may be hard due to potential local minima. 
Therefore, we consider a different approach, developing a new two-step plug-in method based on Theorem \ref{thm-GWTR}.
Suppose we observe data points $(x_i,a_i,y_i)_{i=1}^n$ drawn independently and identically from a distribution $\mathcal{D}$ over the domain
$\mathcal{X} \times \mathcal{A} \times \mathcal{Y}$.

\textbf{Step 1.}
In the first step, we apply standard machine learning algorithms to learn the feature- and group-conditional label probabilities $\eta$ based on the whole dataset.
Consider  a loss function $L(\cdot, \cdot)$ and the function class $\mathcal{F}=\{f_\theta: \theta\in\Theta \}$ parametrized by $\theta$. 
The estimator of $\eta$ is obtained by minimizing the empirical risk, $\hat\eta_a(x):=f_{\hat\theta}(x,a)$, where
\begin{align}\label{ht}
    \hat{\theta}&\in \underset{\theta\in\Theta}{\text{argmin}}\frac1{n}\sum_{i=1}^nL(y_{i},f_{\theta}(x_{i},a_i)).
\end{align}
Here we use the cross-entropy loss, as minimizing the empirical 0-1 risk is generally not tractable. 
At the population level,
the minimizers of the risks induced by the 0-1 and cross-entropy losses are both the
true conditional probability function \citep{miller1993loss}.
\begin{algorithm}[tb]
   \caption{FairBayes-DPP}
   \label{alg:pre-par}
\begin{algorithmic}
   \STATE {\bfseries Input:}  Datasets $S=\cup_{a=1}^{|\mathcal{A}|}S_a$ with $S=\{x_{i},a_i,y_{i}\}_{i=1}^{n}$ and $S_a=\{x^{(a)}_{j},y^{(a)}_{j}\}_{j=1}^{n_a}$. Cost parameter $c\in[0,1]$.

   \STATE {\textbf{Step 1}:} Estimate $\eta_a(x)$ by $\hat\eta = f_{\hat\theta}$, with $\hat\theta$ from \eqref{ht}
   %$\hat{\theta}_a \in {\text{argmin}}_{\theta\in\Theta}\frac1{n}\sum_{i=1}^{n}L_{1/2}(y_{i},f_{\theta}(a_i,x_{i}));$
           \STATE {\textbf{Step 2}: Find the optimal thresholds.}.
   \STATE  Define, for all $t$,\qquad
    $\widehat{\text{PPV}}_{a}(t)=\frac{{\sum}_{j=1}^{n_a} I(y^{(a)}_j=1,\hat{\eta}_{a}(x^{(a)}_j)\geq t)}{{\sum}_{j=1}^{n_a}I(\hat{\eta}_{a}(x^{(a)}_j)\geq t)}$, \qquad $\hat{P}(Y=1|A=a)=\frac{1}{n_a}\sum\limits_{i=1}^{n_a}y^{(a)}_j$.
    
  \IF{$\min_a\widehat{\text{PPV}}_{a}(c)< \max_a\hat{P}(Y=1|A=a)$}
  \STATE {We recommend considering other fairness measures.}
  \ELSE
  \STATE Let {$t_{\min} = \min\{t: 
 \widehat{\text{PPV}}_{1}(t)\ge \max_a\hat{P}(Y=1|A=a)
  \}.$}
  \FOR{$t \in \mathcal{T}=[t_{\min}, \max_j\hat\eta_1(x^{(1)}_{j})]$}
   %\STATE {$\widehat{\text{PPV}}_1(t)=\frac{{\sum}_{j=1}^{n_1} I(y_{j}^{(1)}=1,\hat{\eta}_{1}(x^{(1)}_j) \geq t)}{{\sum}_{j=1}^{n_1}I(\hat{\eta}_{1}(x_j^{(1)})\geq t)}.$}
    \FOR{$a\in\mathcal{A} \setminus\{1\}$}
    \STATE Find $\hat{t}_a(t)$ such that   {$\widehat{\text{PPV}}_{a}(\hat{t}_a(t))\approx\widehat{\text{PPV}}_1(t).$}
    \ENDFOR  
    \STATE Let $
\hat{f}(x,a,t)
=\tilde{f}\left(x,a;\hat t_1(t),\hat t_2(t),...,\hat t_{|\mathcal{A}|}(t)\right)
=I\left(\widehat\eta_{a}(x)\geq \hat t_a(t)\right).$
\STATE Let $R_c(t)=\frac1n\sum_{i=1}^n c^{(1-y_i)}(1-c)^{y_i}I(y_i\neq \hat{f}(x_i,a_i,t)).
$
    \ENDFOR 
    \STATE Find $\hat{t}=\underset{t\in \mathcal{T}_n}{\text{argmin}}R_c(t).$

\STATE {\bfseries Output:}  
$\widehat{f}_{PP}(x,a)=I(\widehat\eta_{a}(x)\geq\hat{t}_a(\hat{t}))$
    \ENDIF
    \end{algorithmic}
\end{algorithm}

\textbf{Step 2.}
In the second step, we first check the empirical version of Condition \ref{suff-con} for the classifier derived in the first step. 
To be more specific, we divide the data into $|\mathcal{A}|$ parts,
according to the value of $A$: for $a\in\mathcal{A}$, 
$S_a=\{x^{(a)}_{j},y^{(a)}_{j}\}_{j=1}^{n_a}$, where $a^{(a)}_{j}=a$. 
Let, for all $t$ for which it is defined, $$\widehat{\text{PPV}}_{a}(t)=\frac{{\sum}_{j=1}^{n_a} I(y^{(a)}_j=1,\hat{\eta}_{a}(x^{(a)}_j)\geq t)}{{\sum}_{j=1}^{n_a}I(\hat{\eta}_{a}(x^{(a)}_j)\geq t)} \qquad \text{ and } \qquad \hat{P}(Y=1|A=a)=\frac{1}{n_a}\sum\limits_{i=1}^{n_a}y^{(a)}_j.$$
We only divide by nonzero quantities here and below. 
To ensure that the quantities we divide by are nonzero, we restrict to  $t_a\in[0,\max_j(\hat\eta_a(x^{(a)}_{j}))]$ when evaluating $\widehat{\text{PPV}}_{a}(t_a)$.
We check whether $\min_{a}\, \widehat{\text{PPV}}_{a}(c)\geq\max_a\, \hat{P}(Y=1|A=a) $.\footnote{One could modify this to allow some slack; and perform a formal statistical hypothesis test of our sufficient condition.}
%\ed{add discussion of what happens when quantities in denominator are zero}
If this is not satisfied, we recommend considering other fairness measures, as predictive parity may not be appropriate in this case, see the discussion after Theorem \ref{thm-imGWT}. 
% We then conduct a conditional independence test to check whether the classifier derived in the first step satisfies predictive parity. 
% When the $p$-value output by this conditional independence test is larger than a predetermined level $\alpha\in[0,1]$ of conservativeness, 
% we
% do not reject the null hypothesis that the classifier derived in the first step is fair with respect to predictive parity; and thus stop here.
% \textbf{Step 3.}
% If the $p$-value output by the conditional independence test in step 2 is smaller than $\alpha$, 
 If it is satisfied,
 we then  adjust the thresholds of the classifier aiming for predictive parity. 
% \textbf{Step 3.} In the last step, we find the optimal thresholds under fairness.
Based on Theorem~\ref{thm-GWTR}, we consider the following  deterministic classifiers:
\begin{equation}\label{classifier1}
\tilde{f}(x,a;t_1,t_2,...,t_{|\mathcal{A}|})=I\left(\widehat\eta_{a}(x)\geq t_a\right),
\end{equation}
where $\hat{\eta}$ is the estimate of $\eta$ from the first step, and $t_a$, $a\in\mathcal{A}$, are parameters to learn. 

We use the following strategy to estimate $t_a$, $a\in\A$: First, we fix the threshold for the group with $a=1$, say $t$. 
The positive predictive value for this group 
can then be estimated by $\widehat{\text{PPV}}_{1}(t)$.
% \begin{equation*}
% \text{PPV}_1(t)=\frac{{\sum}_{j=1}^{n_1} I(y_{j}^{(1)}=1,\hat{\eta}_{1}(x^{(1)}_j) \geq t)}{{\sum}_{j=1}^{n_1}I(\hat{\eta}_{1}(x_j^{(1)})\geq t)}.
% \end{equation*}
To achieve predictive parity, we need to find thresholds for the other groups such that the positive group-wise predictive values are the same\footnote{Since a sample mean $n^{-1}\sum_{i=1}^n Z_i$ of iid random variables $Z_i$ has a variability of order $O_P(n^{-1/2})$, even if the true predictive parities are equal, the empirical versions may differ by $O_P(n^{-1/2})$. However, in our case we simply find the values $t_a,t$ for which they are as close as possible.}, i.e., find $t_a$, $a=2,3,\ldots,|\A|$, such that
% such that \ed{allow $n^{-1/2}$ gap here?}
\begin{equation}\label{thresh-estimate}
\widehat{\text{PPV}}_{a}(t_a)
% \frac{{\sum}_{j=1}^{n_a} I(y^{(a)}_j=1,\hat{\eta}_{a}(x^{(a)}_j)\geq t_a)}{{\sum}_{j=1}^{n_a}I(\hat{\eta}_{a}(x^{(a)}_j)\geq t_a)}
\approx
\widehat{\text{PPV}}_{1}(t)
%\text{PPV}(t)
, \ \ \ \text{ for } a = 2,3,...,|\mathcal{A}|.
\end{equation}
As stated in Lemma \ref{lem1}, the positive predictive value for each group in the population is always non-decreasing with the thresholds $t_a$ increases. 
As a consequence, we can search over
$t_a$, $a=2,3,\ldots,|\A|$, 
efficiently  via, for instance, the bisection method.\footnote{The empirical PPV is only approximately monotonic, but this does not cause problems.} 
Correspondingly, we consider the following range of $t$: $\mathcal{T}=[t_{\min},\max_j \eta_1(x^{(1)}_{j})]$ with
$$t_{\min} = \min\{t: 
\widehat{\text{PPV}}_{1}(t)\ge \max_a\hat{P}(Y=1|A=a)
\}.$$

We denote by $\hat t_a(t)$, $a=2,3,\ldots,|\A|$, the estimated thresholds given by \eqref{thresh-estimate}, writing $\hat t_1(t)=t$ for convenience. 
We consider the classifier \eqref{classifier1} with these thresholds:
\begin{equation*}%\label{classifier2}
\hat{f}(x,a,t)
=\tilde{f}\left(x,a;\hat t_1(t),\hat t_2(t),...,\hat t_{|\mathcal{A}|}(t)\right)
=I\left(\widehat\eta_{a}(x)\geq \hat t_a(t)\right).
\end{equation*}
 Lastly, we find $t$ that minimizes the cost-sensitive risk on the training data by searching over a grid $\mathcal{T}_n$ within $\mathcal{T}$:
$$%\label{hatt}
\hat{t}=\underset{t\in \mathcal{T}_n}{\text{argmin}}\left\{\frac1n\sum_{i=1}^n c^{(1-y_i)}(1-c)^{y_i}I(y_i\neq \hat{f}(x_i,a_i,t))\right\}.
$$
%where $\hat{y}_i(t)=I\left(\widehat\eta_{a}(x_i,a_i)>T_{a_i}(t)\right).$
Our final estimator of the fair Bayes-optimal classifier is $\hat{f}_{PP} = \hat f_{\hat t}$. %\ed{add algorithm separately in algorithm environment, give name}
%$$\widehat{f}_{PPV}(x,a)=
%I\left(\widehat\eta_{a}(x)>\hat t_a(\hat{t})\right).$$
The FairBayes-DPP algorithm is related to the algorithms proposed for other fairness measures in  \cite{ZDC2022}, where a binary protected attribute is considered and closed-form optimal thresholds are derived. 
In contrast, FairBayes-DPP can handle multi-class protected attributes and does not rely on closed-form thresholds.
Similar to \cite{ZDC2022}, our algorithm enforces fairness 
only in the fast second step, where no gradient-based technique is applied. 
Thus, it is computationally efficient and the non-convexity of fairness constraint is no longer problematic. 
Our experimental results demonstrate that our method removes disparities and preserves accuracy.

\section{Experiments}

\begin{table}%[ht]
\caption{Classification accuracy and DPP of the true fair Bayes-optimal classifier and our estimator trained via logistic regression on a synthetic data example. See Section \ref{synth} for details.}
\label{table_synth}
\vspace{-0.2cm}
\begin{center}
\setlength{\tabcolsep}{5.7pt}
\renewcommand{\arraystretch}{0.95}
\begin{small}
\begin{sc}
\begin{tabular}{cc|cc|cc|cc}
\hline
\multicolumn{4}{c|}{Theoretical Value}  & \multicolumn{4}{c}{Logistic regression} \\\hline
\multicolumn{2}{c|}{Fair}  &\multicolumn{2}{c|}{Unconstrained}  &\multicolumn{2}{c|}{FairBayes-DPP}  &\multicolumn{2}{c}{Unconstrained}   \\\hline
  $p$ & ACC &DPP& ACC & DPP& ACC&{DPP}& {{ACC}} \\\hline
0.2 & 0.814 & 0.000 & 0.814 & 0.049\,(0.036) & 0.813\,(0.005) & 0.046\,(0.037) & 0.813\,(0.005)\\
0.3 & 0.794 & 0.024 & 0.794 & 0.037\,(0.029) & 0.794\,(0.006) & 0.040\,(0.033) & 0.794\,(0.005)\\
0.4 & 0.781 & 0.050 & 0.781 & 0.035\,(0.029) & 0.781\,(0.006) & 0.054\,(0.029) & 0.782\,(0.005)\\
0.5 & 0.775 & 0.078 & 0.777 & 0.042\,(0.032) & 0.775\,(0.006) & 0.081\,(0.036) & 0.777\,(0.006)\\
0.6 & 0.778 & 0.113 & 0.781 & 0.038\,(0.031) & 0.778\,(0.006) & 0.113\,(0.037) & 0.781\,(0.006)\\\hline
\end{tabular}
\end{sc}
\end{small}
\end{center}
\vspace{-0.4cm}
\end{table}
\subsection{Synthetic Data}
\label{synth}
We first study a synthetic dataset to compare our method with the true Bayes-optimal fair classifier derived analytically using the true data distribution.

\textbf{Statistical model.}
Let $X=(X_1,X_2)\in \mathbb{R}^2$ be a generic feature, $A\in\{0,1\}$ be the protected attribute and $Y\in\{0,1\}$ be the label. 
We generate $A$ and $Y$ according to the probabilities $P(A=1)$, $P(Y=1|A=1)$ and $P(Y=1|A=0)$, specified below.
Conditional on $A=a$ and $Y=y$, $X$ is generated from a bivariate
Gaussian distribution $N((2a-1,2y-1)^\top,2^2I_2)$, where $I_p$ is the $p$-dimensional identity covariance matrix.
In this model,  $\eta_a(x)$ has a closed form, and we use it to find the true fair Bayes-optimal classifier numerically under the Condition \ref{suff-con}. More details about this synthetic model can be found in Section \ref{appendix-synth} of Appendix.

%Clearly, the synthetic data is generally biased with respect to both demographic parity and equality of opportunity. 

 \textbf{Experimental setting.}
 We randomly sample $50,000$ training data points  and $5,000$ test data points. 
 In the Gaussian case, the Bayes-optimal classifier is linear in $x$ 
%  \ed{also under fairness?},
and thus we employ logistic regression  to learn $\eta_1(\cdot)$ and $\eta_0(\cdot)$. 
 We then %test whether the derived classifier is already fair \ed{how?} and if not, we 
 search over a grid with spacings equal to $0.001$ over the range we identified in Section \ref{sec-alg}
%  \ed{which one?}
for the empirically optimal thresholds under fairness.
 We denote  $\widehat{f}$ and $\widehat{f}_{PPV}$ the estimators of the unconstrained and fair Bayes-optimal classifiers, respectively. %More details about our synthetic model are provided in the appendix.

We first evaluate the FairBayes-DPP algorithm under the  Condition \ref{suff-con}. We set the cost parameter $c=0.5$, while $P(A=1)=0.3$ and $P({Y=1|A=0})=0.2$.
It can be calculated that $P(Y=1|\,\eta_0(X)>0.5,A=0)\approx 0.613$, using \eqref{suff-gaussian} in the Appendix. 
To consider settings with varied levels of fairness in the population, 
we vary $p=P({Y=1|A=1})$ from $0.2$ to $0.6$, with the DPP of unconstrained Bayes-optimal classifier grows from $0$ to $0.113$. 

Table \ref{table_synth} presents the classification accuracy and DPP of the true fair Bayes-optimal classifier and FairBayes-DPP trained via logistic regression over 100 simulations\footnote{Here, the randomness of the experiment is due to the random generation of the synthetic data.}. 
Our first observation is that, under predictive parity, the accuracy of true unconstrained and fair Bayes-optimal classifiers is almost identical, indicating  that predictive parity under Condition \ref{suff-con} requires a very small loss of accuracy. 
This finding is consistent with the results in \cite{LSMH2019} that sufficiency-based measures are favored by unconstrained learning.

Second, our FairBayes-DPP method  closely tracks the behavior of the fair Bayes-optimal classifier, controlling the accuracy metric ACC and unfairness metric DPP on the test data effectively.
When $|P({Y=1|A=1})-P({Y=1|A=0})|$ is small, FairBayes-DPP  performs similarly to the unconstrained classifier. 
However, when the data is biased against protected groups and $|P({Y=1|A=1})-P({Y=1|A=0})|$ is large, 
FairBayes-DPP  mitigates the disparity of the unconstrained classifier effectively, while preserving model accuracy. We further conduct extensive simulations to evaluate  the FairBayes-DPP algorithm with different model and training setups, as shown in the Appendix. In particular, we also consider the  multi-class protected attribute case.

% is fair and it mitigate the bias efficiently 

% As shown in Figure 2,  optimal one.
% \ed{discuss results}
% At the test phase, 
% we denote $\mathcal{S}^{test}=(x_{i},a_i,y_{i})_{i=1}^{5000}$ the test set. We use the following metrics to measure the empirical accuracy and fairness of $\widehat{f}$ and $\widehat{f}_{PPV}$ on the test set:
% $\widehat{\text{Acc}}_{D,\delta}=\frac1{5000}\sum_{i=1}^{5000}I(y_{i}=\widehat{y}_{D,\delta,i})$, and
% $\widehat{\text{Acc}}_{E,\delta}=\frac1{5000}\sum_{i=1}^{5000}I(y_{i}=\widehat{y}_{E,\delta,i})$, as well as
% \begin{align*}
% &\widehat{\text{DDP}}=\frac{\sum\limits_{i\in \mathcal{S}_{1}}I(\widehat{y}_{D,\delta,i}=1)}{{|\mathcal{S}_{1}|}}-\frac{\sum\limits_{i\in \mathcal{S}_{0}}I(\widehat{y}_{D,\delta,i}=1)}{{|\mathcal{S}_{0}|}};\\
%   &\widehat{\text{DEO}}=\frac{\sum\limits_{i\in \mathcal{S}_{1,1}}I(\widehat{y}_{E,\delta,i}=1)}{{|\mathcal{S}_{1,1}|}}-\frac{\sum\limits_{i\in \mathcal{S}_{0,1}}I(\widehat{y}_{E,\delta,i}=1)}{{|\mathcal{S}_{0,1}|}}.
% \end{align*}

\subsection{Empirical Data Analysis}
{\bf Dataset.} We test FairBayes-DPP on two  benchmark  datasets for fair classification: ``Adult'' \cite{Dua2019} and ``COMPAS'' \cite{JJSL2016}. 
For each dataset, we randomly sample (with replacement) 70\%, 50\% and 30\% as the training, validation and test set, respectively. 
To further test the performance of our algorithm on a large-scale dataset, we conduct experiments on the CelebFaces Attributes (CelebA) Dataset \cite{CelebAdata}.

\begin{compactitem}
\item {\it Adult:} The target variable $Y$ is whether the income of an individual is more than \$50,000.  Age, marriage status, education level and other related variables are included in $X$, and the protected attribute $A$ refers to gender. 

\item {\it COMPAS:} In the COMPAS dataset, the target is to predict recidivism. Here $Y$ indicates whether or not a criminal will reoffend, while $X$ includes  prior criminal records, age and an indicator of misdemeanor. 
The protected attribute $A$  is the race of an individual, ``white-vs-non-white''.

\item {\it CelebA:} CelebA dataset is a large-scale dataset with more than 200,000 face images, 
each with 40 attributes (including protected attribute ``gender'' and other 39 different attributes for prediction tasks). 
Our goal is to predict the face attributes $Y$ based on the images $X$ and remove bias with respect to gender  $A$ from the output. 
\end{compactitem}

\begin{figure}[t]
    \centering
    \includegraphics[scale=0.325]{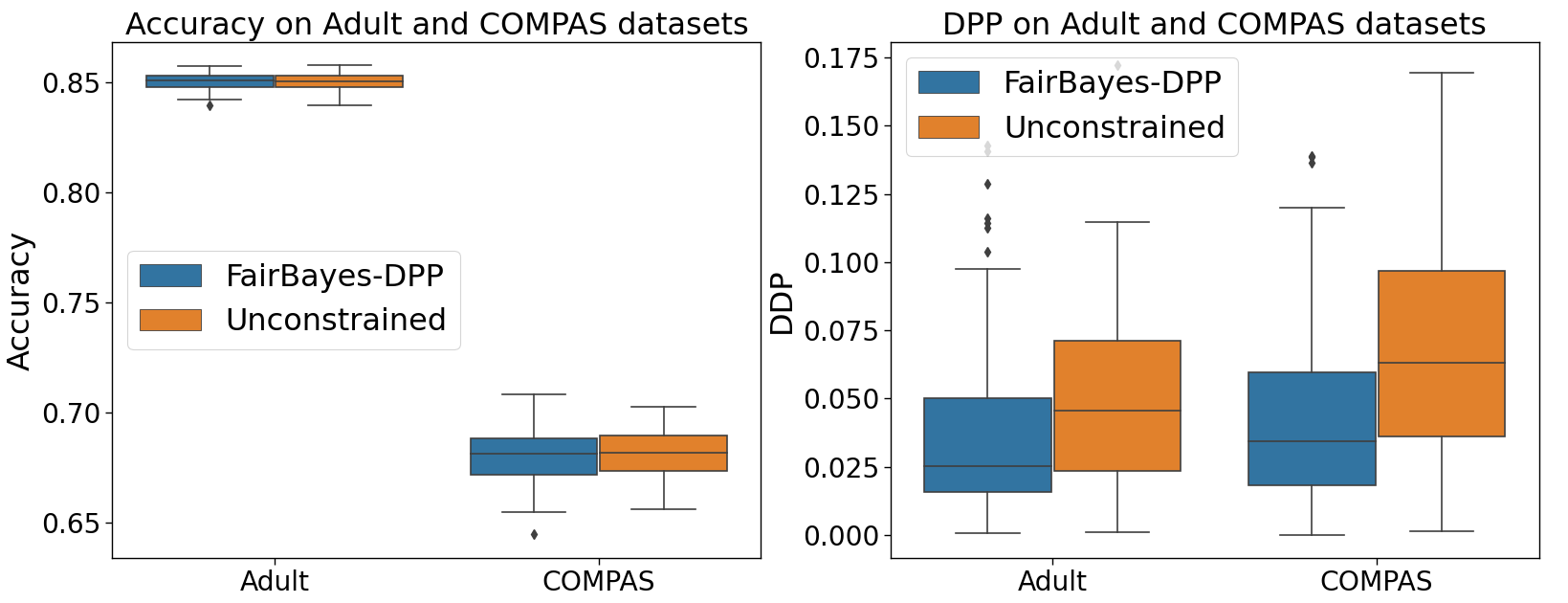}
        \vspace{-0.2cm}
    \caption{Accuracy and difference in predictive parity on the Adult and  COMPAS datasets.}
    \vspace{-0.3cm}
    \label{fig-ad_co}
\end{figure}

 \textbf{Experimental setting.} 
 As algorithms for predictive parity are rarely considered in the literature, we use unconstrained learning as a baseline for our experiments. 
 For the ``Adult'' and ``COMPAS'' datasets, we adopt the same training setting as in \cite{CHS2020, ZDC2022}. The conditional probabilities are learned via a three-layer fully connected neural network architecture with 32 hidden neurons per layer. 
 For  ``CelebA'', we apply the training setting from \cite{WQK2020}. We learn the conditional probabilities by training a ResNet50 model; \citep{HZRS15Res}, pretrained on ImageNet \cite{DDSLKL2009}. For all the datasets, 
Over the course of training the model on
the training set, we select the one with best performance on the validation set. 
In addition, we learn the optimal thresholds over the validation set to avoid overfitting.
All experiments use PyTorch. We refer
readers to the Appendix for more training details, including optimizer, learning rates, batch sizes and training epochs. 
We repeat the experiment 100 times for the Adult and  COMPAS datasets and 10 times for the CelebA dataset.\footnote{For the Adult and  COMPAS datasets, the randomness of the experiment comes from the random selection of the training, validation and test data, as well as the 
stochasticity of the batch selection in the optimization algorithm. 
For the CelebA dataset, the randomness is caused by the stochasticity of the optimization method.}

Figure \ref{fig-ad_co} presents the average performances of FairBayes-DPP and unconstrained learning on the Adult and  COMPAS datasets.
Our method achieves almost the same accuracy as the unconstrained classifier, and has a smaller disparity.  
To better compare our fair classifier with the unconstrained one, we  use the paired $t$-test to compare the DPP of the proposed algorithm ($\text{DPP}_{Fair}$) and of unconstrained learning ($\text{DPP}_{Base}$).  
We consider the following one sided test: 
$$\mathcal{H}_0: \text{DPP}_{fair}=\text{DPP}_{Base} \ \ \ \text{ v.s. }  \ \ \ \mathcal{H}_1:\text{DPP}_{fair}<\text{DPP}_{Base} .$$
The $p$-values of the tests are $3.90\times 10^{-4}$ for the Adult dataset and $3.09\times 10^{-8}$
for the COMPAS dataset. 
In both cases, these results provide evidence 
that our  FairBayes-DPP   achieves a smaller disparity than unconstrained learning.

Finally, we test FairBayes-DPP  on the CelebA dataset,
% , in which there are 39 different prediction tasks (face attributes). 
Here, we only consider 27 attributes\footnote{Among the 39 attributes,  12 are heavily skewed  with $ \min(P(Y=1|M),P(Y=1|F))<0.01$ or $\max(P(Y=1|M),P(Y=1|F))>0.99$ (where $M$ represents Male and $F$ represents Female) in the training, validation or test set. They are: 
 ``5 o'Clock Shadow'', ``Bald'', ``Double Chin'', ``Goatee'',  ``Gray Hair'', ``Heavy Makeup'', ``Mustache'', ``No Beard'', ``Rosy Cheeks'', ``Sideburns'', ``Wearing Lipstick'' and ``Wearing Necktie''.} 
 with $0.01\leq P(Y=1|M),P(Y=1|F)\leq 0.99$ in the training, validation, and test sets to ensure that the training, validation and test sample sizes are large enough for each subgroup.
% We first remove the feature with  5 of them:  since they do not contain at least one image of all four combinations of male/female and $Y\in\{-1,1\}$. 
We further identify one attribute, ``Young'', that violates  Condition \ref{suff-con}. We calculate the per-attribute accuracies and DPPs  on the test set. Table \ref{Table-celeba} presents the results of the first six attributes;  the  remaining results are in the Appendix. 
As we can see, even for the large-scale CelebA dataset with high dimensional image features, our algorithm mitigates the gender bias effectively, with almost no loss of accuracy.

\begin{table}%[t]
    \caption{Per-attribute accuracy and DPP of the FairBayes-DPP algorithm and unconstrained optimization.}
        \label{Table-celeba}
        \vspace{-0.2cm}
\begin{center}
\setlength{\tabcolsep}{5.0pt}
\renewcommand{\arraystretch}{0.98}
\begin{small}
\begin{sc}
    \begin{tabular}{l|cc|cc}\hline
 \multirow{2}[1]{*}{Attributes}           &      \multicolumn{2}{c|}{Per-attribute Accuracy}&      \multicolumn{2}{c}{Per-attribute DPP}\\\cline{2-5}
                      & FairBayes-DPP& Unconstrained & FairBayes-DPP& Unconstrained \\\hline
 Arched Eyebrows      & 0.838(0.003)   & 0.838(0.003)   & 0.027(0.015)   & 0.099(0.041) \\
 Attractive           & 0.825(0.002)   & 0.826(0.003)   & 0.075(0.011)   & 0.169(0.016) \\
 Bags Under Eyes      & 0.853(0.002)   & 0.852(0.002)   & 0.024(0.015)   & 0.056(0.034) \\
 Bangs                & 0.959(0.001)   & 0.959(0.001)   & 0.007(0.007)   & 0.069(0.029) \\
 Big Lips             & 0.706(0.002)   & 0.717(0.003)   & 0.023(0.015)   & 0.115(0.027) \\
 Big Nose             & 0.845(0.002)   & 0.847(0.003)   & 0.083(0.020)   & 0.145(0.023) \\\hline
  \end{tabular}
  \end{sc}
\end{small}
\end{center}
  \vspace{-0.5cm}
\end{table}

\section{Summary and Discussion}
In this paper, we investigate fair Bayes-optimal classifiers under predictive parity. We prove that when the overall performances of different protected groups vary only moderately, all fair Bayes-optimal classifiers under predictive parity are GWTRs. We further propose a  post-processing algorithm to estimate the optimal GWTR. The derived post-processing algorithm removes the disparity in unconstrained classifiers effectively, while preserving a similar test accuracy.

However, when our sufficient condition is not satisfied, the fair Bayes-optimal classifier under predictive parity may lead to within-group unfairness for the minority group. In the current literature, man algorithms directly apply penalized/constrained optimization to impose fairness. 
Our negative finding, however, is an important reminder  that careful analysis is required
before employing a fairness measure. 
The improper use of a measure may result in severe unintended consequences.

\bibliographystyle{plain} % We choose the "plain" reference style
\bibliography{fair_optimal}

%%%%%%%%%%%%%%%%%%%%%%%%%%%%%%%%%%%%%%%%%%%%%%%%%%%%%%%%%%%%

%%%%%%%%%%%%%%%%%%%%%%%%%%%%%%%%%%%%%%%%%%%%%%%%%%%%%%%%%%%%%%%%%%%%%%%%%%%%%%%
%%%%%%%%%%%%%%%%%%%%%%%%%%%%%%%%%%%%%%%%%%%%%%%%%%%%%%%%%%%%%%%%%%%%%%%%%%%%%%%
% APPENDIX
%%%%%%%%%%%%%%%%%%%%%%%%%%%%%%%%%%%%%%%%%%%%%%%%%%%%%%%%%%%%%%%%%%%%%%%%%%%%%%%
%%%%%%%%%%%%%%%%%%%%%%%%%%%%%%%%%%%%%%%%%%%%%%%%%%%%%%%%%%%%%%%%%%%%%%%%%%%%%%%
%%%\newpage
%\appendix

\newpage

\appendix

 \section{Proof of Theorem \ref{thm-GWTR}}

{\bf Additional notations.}
We use the following notations.
A Bernoulli random variable $B$ with success probability $p\in[0,1]$ is denoted as $B\sim\be(p)$. We denote $p_a:=P(A=a)$ and $p_{Y|a}:=P(Y=1|A=a)$; 
% the positive predictive value (PPV) a classifier $f$ by
% $\text{PPV}(f)=P(Y=1 |\widehat{Y}_f  = 1)$ and its group-wise PPV by $\text{PPV}_a(f)=P(Y=1 |\widehat{Y}_f  = 1,A=a)$. 
Further, we denote by $P_X$ and $P_{X|a}$  the distribution of $X$ and the conditional distribution of $X$ given $A=a$, respectively.

The proof of Theorem \ref{thm-GWTR} relies on the following two technical lemmas.
For any $\tau\in[0,1]$, consider a Bernoulli random variable $B\sim \be(\tau)$, independent of other sources of randomness considered. 
For all $a\in\mathcal{A}$, $t\in[0,1]$ and $\tau \in [0,1]$, 
define the random variable $\hat{Y}_{a,t,\tau}$ by
$$\hat{Y}_{a,t,\tau}=I(\eta_a(X)>t)+B\cdot I(\eta_a(X)=t).$$
For all $a\in\mathcal{A}$, define the set $S$
$$
S = \{(t,\tau)\in[0,1]^2:\,P(\eta_A(X)>t|A=a)
+\tau\cdot P(\eta_A(X)=t|A=a)>0\}.
$$
For all $(t,\tau)\in S$,
denote
\begin{equation}\label{ga}
    g_a(t,\tau)=P(Y=1|\hat{Y}_{a,t,\tau}=1,A=a).
\end{equation}
This is well-defined due to the definition of $S$.

\begin{lemma}\label{lem1}
For all $a\in\mathcal{A}$, $t\in[0,1]$ 
and $0\leq \tau_1\leq \tau_2\leq 1$, 
such that $(t,\tau_1)\in S$,
we have
\begin{equation}\label{monotau}
   g_a(t,\tau_1)\geq g_a(t,\tau_2)\geq t. 
\end{equation}
Furthermore, 
for all $a\in\mathcal{A}$, $ \tau_1,\tau_2\in[0,1]$ and $0\leq t_1\leq t_2\leq 1$
such that $(t_1,\tau_1)\in S$, $(t_1,\tau_2)\in S$,
\begin{equation}\label{monot}
g_a(t_1,\tau_1)\leq g_a(t_2,\tau_2).
\end{equation}
\end{lemma}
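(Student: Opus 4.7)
The plan is to derive an explicit formula for $g_a(t,\tau)$ and then reduce both claims to elementary convex-combination (``mediant'') inequalities. Using the tower property, the identity $P(Y=1\mid X=x, A=a)=\eta_a(x)$, and independence of $B$ from $(X,Y,A)$, I would show
\[
P(\hat Y_{a,t,\tau}=1\mid A=a) = N(t) + \tau N_=(t), \qquad P(Y=1,\hat Y_{a,t,\tau}=1\mid A=a) = M(t) + \tau\, t\, N_=(t),
\]
where $N(t)=P(\eta_a(X)>t\mid A=a)$, $N_=(t)=P(\eta_a(X)=t\mid A=a)$, and $M(t)=E[\eta_a(X)I(\eta_a(X)>t)\mid A=a]$; the factor $t$ in the second identity is because $\eta_a(X)=t$ on $\{\eta_a(X)=t\}$. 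Thus $g_a(t,\tau) = (M(t) + \tau t N_=(t))/(N(t) + \tau N_=(t))$, which is well-defined precisely on $S$.

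For part (1), the lower bound $g_a(t,\tau)\geq t$ follows immediately from $M(t)\geq tN(t)$ (since $\eta_a>t$ on the integration domain), which makes the numerator at least $t$ times the denominator. The monotonicity $g_a(t,\tau_1)\geq g_a(t,\tau_2)$ for $\tau_1\leq\tau_2$ then follows from the algebraic identity expressing $g_a(t,\tau_2)$ as a convex combination of $g_a(t,\tau_1)$ and $t$: splitting off the increment $(\tau_2-\tau_1)N_=(t)$ from the denominator and the increment $(\tau_2-\tau_1)tN_=(t)$ from the numerator gives $g_a(t,\tau_2) = \lambda\,g_a(t,\tau_1) + (1-\lambda)\,t$ with $\lambda=(N(t)+\tau_1 N_=(t))/(N(t)+\tau_2 N_=(t))\in[0,1]$; since $t\leq g_a(t,\tau_1)$, the convex combination lies below $g_a(t,\tau_1)$.

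For part (2), fix $t_1<t_2$ and consider the pointwise difference $\Delta(x):=W_{t_1,\tau_1}(x)-W_{t_2,\tau_2}(x)$ of the two weighting functions, where $W_{t,\tau}(x)=I(\eta_a(x)>t)+\tau I(\eta_a(x)=t)$. A direct five-region case analysis on $\eta_a(x)<t_1$, $=t_1$, $\in(t_1,t_2)$, $=t_2$, $>t_2$ shows $\Delta\geq 0$ pointwise and that $\Delta$ is supported on $\{\eta_a\leq t_2\}$, so $E[\eta_a(X)\Delta(X)\mid A=a]\leq t_2\,E[\Delta(X)\mid A=a]$. Writing $g_a(t_1,\tau_1) = (A+C)/(B+D)$ with $A/B=g_a(t_2,\tau_2)$ and $C/D\leq t_2\leq g_a(t_2,\tau_2)=A/B$ (using part (1)), the elementary mediant inequality $A/B\geq C/D \Rightarrow (A+C)/(B+D)\leq A/B$ yields the conclusion. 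The degenerate sub-case $D=0$ reduces to $g_a(t_1,\tau_1)=g_a(t_2,\tau_2)$, and the boundary $t_1=t_2$ is dispatched by part (1).

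The main obstacle I anticipate is not analytic but bookkeeping: one must verify the sign and support of $\Delta$ carefully across the five regions (the $\eta_a(x)=t_1$ and $\eta_a(x)=t_2$ boundary cases, where the $\tau$-weights play a role, deserve particular care), and check that all ratios manipulated are legitimately defined on $S$ so that the mediant step is valid. Once that case analysis is set up correctly, the rest of the proof is two applications of the same mediant inequality.
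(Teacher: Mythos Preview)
Your proof is correct and follows essentially the same approach as the paper: both express $g_a(t,\tau)$ as a ratio whose numerator and denominator decompose into a strict part (with ratio exceeding $t$) and a boundary part (with ratio exactly $t$), and then deduce the two monotonicity claims from this structure---what you package as the mediant/convex-combination inequality the paper carries out as a direct chain of fraction inequalities for \eqref{monotau} and a split-the-integral computation for \eqref{monot}. One minor slip: your remark that the boundary case $t_1=t_2$ is ``dispatched by part (1)'' is not right, since part (1) gives the \emph{reverse} inequality in $\tau$; but this edge case is equally unaddressed in the paper's own statement and proof, so your argument is as complete as theirs.
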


\begin{proof}
For all $a\in\A$, $y\in\Y$, and $t\in [0,1]$,
denote 
\begin{align*}
w_{ay}(t)&=P(\eta_A(X)>t|A=a,Y=y), \qquad v_{ay}(t)=P(\eta_A(X)=t|A=a,Y=y), \\
w_{a}(t)&=P(\eta_A(X)>t|A=a), \qquad
v_{a}(t)=P(\eta_A(X)=t|A=a).
\end{align*}
Let $0\leq t_1\leq t_2\leq 1$. 
Recalling the conditional density $dP_{X|a,y}$ of $X$ given $A=a$ and $Y=y$, 
we have that $\eta_a(x)=\frac{p_{Y,a}dP_{X|A=a,Y=1}(x)}{dP_{X|a}(x)}.$ 
We thus have for all $t\in [0,1]$ for which $w_{a}(t)>0$ that
\begin{eqnarray*}
\frac{p_{Y|a}w_{a1}(t)}{w_{a}(t)}&=&\frac{p_{Y|a}\int_{\eta_a(x)>t}dP_{X|A=a,Y=1}(x)}{\int_{\eta_a(x)>t}dP_{X|a}(x)}
=\frac{\int_{\eta_a(x)>t}\eta_a(x)dP_{X|a}(x)}{\int_{\eta_a(x)>t}dP_{X|a}(x)}>t.
\end{eqnarray*}
Further, when $v_{a}(t)>0$,
\begin{eqnarray*}
\frac{p_{Y|a}v_{a1}(t)}{v_{a}(t)}&=&\frac{p_{Y|a}\int_{\eta_a(x)=t}dP_{X|A=a,Y=1}(x)}{\int_{\eta_a(x)=t}dP_{X|a}(x)}
=\frac{\int_{\eta_a(x)=t}\eta_a(x)dP_{X|a}(x)}{\int_{\eta_a(x)=t}dP_{X|a}(x)}=t.
\end{eqnarray*}
It follows that, 
for $t\in [0,1]$ %for which $w_{a}(t)+v_{a}(t)>0$,
and $0\leq \tau_1\leq\tau_2\leq 1$,
such that $(t,\tau_1)\in S$,
\begin{eqnarray*}
t\leq \frac{w_{a1}(t)+ v_{a1}(t)}{w_{a}(t)+v_{a}(t)}\leq \frac{w_{a1}(t)+\tau_2 v_{a1}(t)}{w_{a}(t)+\tau_2 v_{a}(t)}\leq\frac{w_{a1}(t)+\tau_1 v_{a1}(t)}{w_{a}(t)+\tau_1 v_{a}(t)}.
%\leq  \frac{w_{a1}(t)}{w_{a}(t)}.
\end{eqnarray*}
Eq. (\ref{monotau}) follows since for all $t,\tau\in[0,1]$ such that $(t,\tau)\in S$,
$$g_a(t,\tau)=\frac{p_{Y,a}[w_{a1}(t)+\tau v_{a1}(t)]}{w_{a}(t)+\tau v_{a}(t)}.$$

For Eq. (\ref{monot}), we have that, when $0\leq t_1\leq t_2\leq 1$
and  $P(\eta_A(X)>t_2|A=a)>0$,
\begin{eqnarray*}
g_a(t_{1},\tau_{1})-g_a(t_{2},\tau_{2})
&=&\frac{p_{Y|a}[w_{a1}(t_1)+\tau_1 v_{a1}(t_1)]}{w_{a}(t_1)+\tau_1v_{a}(t_1)}-\frac{p_{Y,a}[w_{a1}(t_2)+\tau_2 v_{a1}(t_2)]}{w_{a}(t_2)+\tau_2v_{a}(t_2)}\\
&\leq&\frac{p_{Y,a}w_{a1}(t_1)}{w_{a}(t_1)}-\frac{p_{Y,a}[w_{a1}(t_2)+ v_{a1}(t_2)]}{w_{a}(t_2)+v_{a}(t_2)}.
\end{eqnarray*}
This further equals
\begin{eqnarray*}
&&\frac{\int_{\eta_a(x)>t_1}\eta_a(x)dP_{X|a}(x)}{\int_{\eta_a(x)>t_1}dP_{X|a}(x)}-\frac{\int_{\eta_a(x)\geq t_2}\eta_a(x)dP_{X|a}(x)}{\int_{\eta_a(x)\geq t_2}dP_{X|a}(x)}\\
&=&\frac{\int_{t_1<\eta_a(x)<t_2}\eta_a(x)dP_{X|a}(x)+\int_{\eta_a(x)\geq t_2}\eta_a(x)dP_{X|a}(x)}{\int_{t_1<\eta_a(x)< t_2}dP_{X|a}(x)+\int_{\eta_a(x)\geq t_2}dP_{X|a}(x)}
-\frac{\int_{\eta_a(x)\geq t_2}\eta_a(x)dP_{X|a}(x)}{\int_{\eta_a(x)\geq t_2}dP_{X|a}(x)}.
\end{eqnarray*}
This can also be written as
\begin{eqnarray*}
&&\frac{\int_{t_1<\eta_a(x)< t_2}\eta_a(x)dP_{X|a}(x)\cdot\int_{\eta_a(x)\geq t_2}dP_{X|a}(x)}{[\int_{t_1<\eta_a(x)< t_2}dP_{X|a}(x)+\int_{\eta_a(x)\geq t_2}dP_{X|a}(x)]\cdot\int_{\eta_a(x)\geq t_2}dP_{X|a}(x)}\\
&&-\frac{\int_{t_1<\eta_a(x)< t_2}dP_{X|a}(x)\cdot\int_{\eta_a(x)\geq t_2}\eta_a(x)dP_{X|a}(x)}{[\int_{t_1<\eta_a(x)< t_2}dP_{X|a}(x)+\int_{\eta_a(x)\geq t_2}dP_{X|a}(x)]\cdot\int_{\eta_a(x)\geq t_2}dP_{X|a}(x)}
\leq 0.
\end{eqnarray*}
This finishes the proof.
\end{proof}

\begin{lemma}\label{lem2}
For any $a\in\A$
and $s\in[p_{Y|a},1]$, 
there exists $(t_s,\tau_s)\in[0,1]^2$ such that, 
with $g_a$ from \eqref{ga}, 
$$g_a(t_s,\tau_s)=s.$$

\end{lemma}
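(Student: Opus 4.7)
The plan is to apply an intermediate-value-theorem argument after reparametrizing $(t,\tau)$ by the ``selection probability''
$$u(t,\tau) \;=\; P(\hat{Y}_{a,t,\tau}=1 \mid A=a) \;=\; w_a(t)+\tau\,v_a(t),$$
i.e., the conditional probability that a member of group $a$ is predicted positive; the condition $(t,\tau)\in S$ is exactly $u(t,\tau)>0$. For each $u\in(0,1]$ I would exhibit a canonical pair $(t^*(u),\tau^*(u))$ with $u(t^*(u),\tau^*(u))=u$ by taking $t^*(u)=\inf\{t\in[0,1]:\,w_a(t)\le u\}$ (a left-continuous quantile of $\eta_a(X)$ given $A=a$) and $\tau^*(u)=(u-w_a(t^*(u)))/v_a(t^*(u))$ whenever $v_a(t^*(u))>0$ (arbitrary otherwise). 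Informally, $(t^*(u),\tau^*(u))$ selects the top $u$-fraction of group $a$ ordered by $\eta_a(X)$.

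Set $\phi_a(u):=g_a(t^*(u),\tau^*(u))$. Monotonicity of $\phi_a$ is immediate from \eqref{monot} in Lemma~\ref{lem1}. At the endpoint $u=1$ we have $(t^*(1),\tau^*(1))=(0,1)$, so $\hat{Y}_{a,0,1}=1$ almost surely and $\phi_a(1)=P(Y=1\mid A=a)=p_{Y|a}$. As $u\to 0^+$, \eqref{monotau} in Lemma~\ref{lem1} yields $\phi_a(u)\ge t^*(u)$, while $t^*(u)$ increases to the essential supremum of $\eta_a(X)$ given $A=a$, which equals $1$ in the regime of interest. Once $\phi_a$ is shown to be continuous on $(0,1]$, the intermediate value theorem produces, for every $s\in[p_{Y|a},1]$, some $u_s\in(0,1]$ with $\phi_a(u_s)=s$, and then $(t_s,\tau_s):=(t^*(u_s),\tau^*(u_s))$ is the desired pair.

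The main obstacle is verifying the continuity of $\phi_a$, because $\eta_a(X)$ may have atoms, causing $w_a$ and hence $t^*$ to jump. The key observation is that any jump of $t^*$ across an atom $t_0$ corresponds to a $u$-interval of length $v_a(t_0)$ on which $\tau^*(u)$ varies continuously from $0$ to $1$; on that interval, using the formula
$$\phi_a(u) \;=\; \frac{p_{Y|a}\bigl(w_{a1}(t_0)+\tau^*(u)\,v_{a1}(t_0)\bigr)}{u},$$
both the numerator (linear in $\tau^*(u)$, hence linear in $u$) and the denominator $u$ vary linearly, so $\phi_a$ is continuous across the atom. Off atoms, continuity follows from that of $w_a,v_a,w_{a1},v_{a1}$ at non-atomic points, together with the construction of $t^*$ as a left-continuous inverse. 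The only remaining delicacy is at $s=1$, where one either assumes the essential supremum of $\eta_a(X)$ given $A=a$ is $1$, or restates the conclusion with $1$ replaced by that essential supremum.
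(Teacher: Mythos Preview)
Your proposal is correct and reaches the same conclusion, but the organization differs from the paper's. You reparametrize by the selection probability $u$ and apply the intermediate value theorem to the single-variable map $\phi_a(u)$, which then obliges you to verify continuity of $\phi_a$ across atoms (and, implicitly, across gaps in the support of $\eta_a(X)$). The paper sidesteps this: it works directly in $(t,\tau)$, setting $t_s=\sup\{t\in T_1:g_a(t,1)\le s\}$, and uses left-continuity of $t\mapsto g_a(t,1)$ together with Lemma~\ref{lem1} to obtain the sandwich $g_a(t_s,1)\le s\le g_a(t_s,0)$. It then solves $g_a(t_s,\tau)=s$ for $\tau$, which is immediate because for fixed $t_s$ the function $\tau\mapsto g_a(t_s,\tau)$ is a continuous (indeed rational) function of $\tau$ alone; in the atomic case the paper even writes down $\tau_s$ in closed form. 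So the paper's route is shorter and yields an explicit $(t_s,\tau_s)$, whereas your reparametrization is conceptually clean and makes the monotone/IVT structure transparent, at the price of a more delicate continuity check. One small remark: your monotonicity claim for $\phi_a$ needs both \eqref{monot} and \eqref{monotau}, not just \eqref{monot}, since on a $u$-interval corresponding to an atom $t^*(u)$ is constant while $\tau^*(u)$ varies. Both arguments share the same caveat at $s=1$ when the essential supremum of $\eta_a(X)$ given $A=a$ is strictly below $1$.
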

\begin{proof}
For all $a\in\mathcal{A}$, define the sets $T_0$ on which $g_a(t,0)$ and $g_a(t,1)$, respectively, are well-defined:
\begin{align*}
T_0 &= \{t\in[0,1]:\,P(\eta_A(X)>t|A=a)>0\}\\
T_1 &= \{t\in[0,1]:\,P(\eta_A(X)\geq t|A=a)>0\}.
\end{align*}

As a function of 
%$t\in T_0$, $t\mapsto g_a(t,0)$ is right-continuous and 
%as a function of 
$t\in T_1$,
$t\mapsto g_a(t,1)$  is left-continuous.
Letting $t^* = \sup T_0 \in [0,1]$, we have $T_0 = [0,t^*)$ and $T_1 = [0,t^*]$.

Now, since
$g_a(0,1) = p_{Y|a} \leq s$,
%\leq 1 =g_a(t^*,1),$$ 
${t}_s=\sup\{t \in T_1:g_a(t,1)\leq s\}$ is
well-defined.  
% Moreover, by Lemma \ref{lem1}, as a function of $t\in[0,1]$, over the range where $P(\eta_A(X)>t|A=a)>0$ ,
% \ed{How do we know that this covers the whole range [0,1]?}
% for any function $\tilde\tau:[0,1]\to[0,1]$,
% $t\mapsto g_a(t,\tilde\tau(t))$ is monotone non-decreasing.  As a result, $t_s$ is uniquely defined. \ed{why unique? We may need to strengthen Lemma A.1 (and add some assumptions) to show strict monotonicity.}
From Lemma \ref{lem1}, 
the definition of $t_s$, and the left-continuity of $t\mapsto g_a(t,1)$ on $T_1$,
it follows that
$$g_a(t_s,1)\leq s\leq g_a(t_s,0).$$
(1) When $P(\eta_a(X)=t_s|A=1)=0$, for all $\tau\in[0,1]$ we have
$$g_a(t_s,0)=g_a(t_s,\tau)=g_a(t_s,1)=s.$$
In this case, we can set  $\tau_s\in[0,1]$.

(2) When $P(\eta_a(X)>t_s|A=1)=0$ for $a\in\mathcal{A}$,
we have $s=t_s$ and we can set  $\tau_s\in[0,1]$.

(3) When $P(\eta_a(X)=t_s|A=1)\neq 0$, we have  $g_a(t_s,\tau_s)=s$ for
\begin{equation*}
\tau_s=\frac{p_{Y|a}\cdot P(\eta_a(X)>t_s|A=a,Y=1)-s\cdot P(\eta_a(X)>t_s|A=a)}{p_{Y|a}\cdot P(\eta_a(X)=t_s|A=a)-s\cdot P(\eta_a(X)=t_s|A=a,Y=1)}.    
\end{equation*}
\end{proof}

\begin{lemma}\label{lem3}
Let  $f$ be any classifier and $f_G=I(\eta_a(x)>t_a)+\tau_a(x)I(\eta_a(x)=t_a)$ be a GWTR satisfies 
\begin{equation}\label{condition_tau}
I(\tau_a(x)\equiv 1)+ 
I\left({\int f_G(x,a)\eta_a(x)dP_{X|a}(x)}>t_a{\int f_G(x,a)dP_{X|a}(x)}\right)\geq 1.
\end{equation}
Suppose that, for all $a\in\mathcal{A}$,
\begin{equation}\label{eq:lem31}
{\int f(x,a)dP_{X|a}(x)}={\int f_G(x,a)dP_{X|a}(x)}\end{equation}
and
\begin{equation}\label{eq:lem32}\frac{\int f(x,a)\eta_a(x)dP_{X|a}(x)}{\int f(x,a)dP_{X|a}(x)}=\frac{\int f_G(x,a)\eta_a(x)dP_{X|a}(x)}{\int f_G(x,a)dP_{X|a}(x)}.\end{equation}
Then, $f$ is also a GWTR. 
Conversely, if  $f$ is not a GWTR and \eqref{eq:lem32} holds
for all $a\in\mathcal{A}$, we have
$$\sum_{a=1}^{|\mathcal{A}|}p_a{\int [f_G(x,a)-f(x,a)]dP_{X|a}(x)}>0.$$
\end{lemma}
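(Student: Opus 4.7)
The plan is to reduce the lemma, group by group $a\in\mathcal{A}$, to sign analysis of the scalar integral $\int (f_G - f)(\eta_a - t_a)\, dP_{X|a}$. On $\{\eta_a > t_a\}$ one has $f_G = 1 \geq f$ and $\eta_a - t_a > 0$; on $\{\eta_a < t_a\}$ one has $f_G = 0 \leq f$ and $\eta_a - t_a < 0$; and on $\{\eta_a = t_a\}$ the factor $\eta_a - t_a$ vanishes. Hence the integrand $(f_G - f)(\eta_a - t_a)$ is pointwise $\geq 0$, and the integral equals $0$ if and only if $f(x,a) = 1$ a.e.\ on $\{\eta_a > t_a\}$ and $f(x,a) = 0$ a.e.\ on $\{\eta_a < t_a\}$, i.e., iff $f$ agrees with a GWTR of threshold $t_a$ in group $a$ off the boundary $\{\eta_a = t_a\}$.

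Next I would rewrite the same integral using only the shared positive predictive value from \eqref{eq:lem32}. Writing $q_a := \int f_G \eta_a\, dP_{X|a}/\int f_G\, dP_{X|a}$, condition \eqref{eq:lem32} gives $\int f \eta_a\, dP_{X|a} = q_a\int f\, dP_{X|a}$ and $\int f_G \eta_a\, dP_{X|a} = q_a\int f_G\, dP_{X|a}$, and a direct expansion produces the key identity
\begin{equation*}
\int (f_G - f)(\eta_a - t_a)\, dP_{X|a} \;=\; (q_a - t_a)\int (f_G - f)\, dP_{X|a}.
\end{equation*}
Since $f_G$ is supported on $\{\eta_a\geq t_a\}$ one always has $q_a\geq t_a$, so combining with the preceding paragraph yields $(q_a - t_a)\int (f_G - f)\, dP_{X|a}\geq 0$ for every group $a$.

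For the first conclusion, the additional assumption \eqref{eq:lem31} forces $\int (f_G - f)\, dP_{X|a}=0$ in each group, so the LHS of the displayed identity vanishes; the pointwise non-negative integrand must then vanish a.e., putting $f$ into GWTR form with the same threshold $t_a$ as $f_G$ in every group (the boundary function $\tilde\tau_a(x):=f(x,a)$ on $\{\eta_a=t_a\}$ is unrestricted). Thus $f$ is a GWTR.

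For the converse I split on whether $q_a > t_a$ or $q_a = t_a$. When $q_a > t_a$ the identity directly gives $\int (f_G - f)\, dP_{X|a}\geq 0$. The delicate case is $q_a = t_a$, equivalently $P(\eta_a > t_a\mid A=a) = 0$: the second alternative inside \eqref{condition_tau} then fails, so \eqref{condition_tau} supplies $\tau_a\equiv 1$, giving $f_G = I(\eta_a = t_a)$ in group $a$ up to $P_{X|a}$-null sets; meanwhile $\int f(\eta_a - t_a)\, dP_{X|a}=0$ from \eqref{eq:lem32} combined with $\eta_a - t_a < 0$ strictly on $\{\eta_a < t_a\}$ forces $f$ to be supported on $\{\eta_a = t_a\}$, whence $\int f\, dP_{X|a}\leq P(\eta_a = t_a\mid A=a)=\int f_G\, dP_{X|a}$. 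So $\int (f_G - f)\, dP_{X|a}\geq 0$ in every group. If the $p_a$-weighted sum were $0$, each term would vanish, \eqref{eq:lem31} would follow, and the first part already proved would force $f$ to be a GWTR, contradicting the hypothesis of the converse. Hence the sum is strictly positive. The main obstacle will be the borderline case $q_a = t_a$, where only \eqref{condition_tau} prevents $f_G$ from ``under-selecting'' on $\{\eta_a = t_a\}$ and breaking the per-group inequality.
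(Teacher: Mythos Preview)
Your proposal is correct and follows essentially the same approach as the paper: both hinge on the pointwise sign analysis of $(f_G-f)(\eta_a-t_a)$ together with the identity $\int (f_G-f)(\eta_a-t_a)\,dP_{X|a}=(q_a-t_a)\int(f_G-f)\,dP_{X|a}$ obtained from \eqref{eq:lem32}, and both invoke \eqref{condition_tau} in the borderline case $q_a=t_a$ to force $\tau_a\equiv 1$. The only organizational difference is that the paper argues the per-group inequality $\int f\,dP_{X|a}\le \int f_G\,dP_{X|a}$ by contradiction (assuming $\int f>\int f_G$ and deriving a sign clash with the analogue of your displayed identity), whereas you deduce it directly; your handling of the $q_a=t_a$ case via $P(\eta_a>t_a\mid A=a)=0$ is also slightly more streamlined than the paper's, but the substance is the same.
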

\begin{proof}
We assume $f_G$ takes the following form: for all $x\in\X$ and $a\in\A$,
\begin{equation*}
    f_G(x,a)=I(\eta_a(x)>t_a)+\tau_a(x,a) I(\eta_a(x)=t^\dag_a).
\end{equation*}
From \eqref{eq:lem31} and \eqref{eq:lem32}, we have
\begin{eqnarray}\label{lem3:eq1}
\nonumber&&\int (f(x,a)-f_G(x,a))dP_{X|a}(x)=\int_{\eta(x)>t_a}(f(x,a)-1)dP_{X|a}(x)\\
&&\qquad+\int_{\eta(x)<t_a}f(x,a)dP_{X|a}(x)+\int_{\eta(x)=t_a}(f(x,a)-\tau_a(x))dP_{X|a}(x)=0,
\end{eqnarray}
and
\begin{eqnarray}\label{lem3:eq2}
\nonumber&&\int (f(x,a)-f_G(x,a))\eta_a(x)dP_{X|a}(x)=\int_{\eta(x)>t_a}(f(x,a)-1)\eta_a(x)dP_{X|a}(x)\\
&&\qquad+\int_{\eta(x)<t_a}f(x,a)dP_{X|a}(x)+t_a\int_{\eta(x)=t_a}(f(x,a)-\tau_a(x))dP_{X|a}(x)=0.
\end{eqnarray}
Combining \eqref{lem3:eq1} and \eqref{lem3:eq2} gives us, for all $a\in\mathcal{A}$,
\begin{equation*}\int_{\eta_a(x)>t_a}(f(x,a)-1)(\eta_a(x)-t_a)dP_{X|a}(x)+\int_{\eta_a(x)<t_a}f(x,a)(\eta_a(x)-t_a)dP_{X|a}(x)=0.
\end{equation*}
Noting that $I(\eta_a(x)>t_a)(f(x,a)-1)(\eta_a(x)-t_a)\leq 0$  and $I(\eta_a(x)<t_a)f(x,a)(\eta_a(x)-t_a)\leq 0$,  we have
\begin{equation}\label{lem3:eq3}\int_{\eta_a(x)>t_a}(f(x,a)-1)(\eta_a(x)-t_a)dP_{X|a}(x)+\int_{\eta_a(x)<t_a}f(x,a)(\eta_a(x)-t_a)dP_{X|a}(x)\leq 0.
\end{equation}
The equality holds if and only if,  for all $a\in\mathcal{A}$,
$f(x,a)=f_G(x,a)$ almost surely on the set
$\{\eta_a(x)>t_a\}\cup \{\eta_a(x)>t_a\}$. In other words, $f$ is also a GWTR.

When $f$ is not a GWTR, let $$\frac{\int f(x,a)\eta_a(x)dP_{X|a}(x)}{\int f(x,a)dP_{X|a}(x)}=\frac{\int f_G(x,a)\eta_a(x)dP_{X|a}(x)}{\int f_G(x,a)dP_{X|a}(x)}=s_G.$$
We have $0\leq t_a\leq s_G$ by Lemma \ref{lem2}. Suppose there exists a  $a\in\mathcal{A}$ such that
\begin{equation}\label{contradiction}
{\int f(x,a)dP_{X|a}(x)}>{\int f_G(x,a)dP_{X|a}(x)}.
\end{equation}
(1) When $t_a<s_G$, we have,
\begin{eqnarray*}
&&{\int [f(x,a)-f_G(x,a)]\eta_a(x)dP_{X|a}(x)}- t_a{\int [f(x,a)-f_G(x,a)]dP_{X|a}(x)}\\
&=&\int_{\eta_a(x)>t_a} (f(x,a)-1)(\eta_a(x)-t_a)dP_{X|a}(x)+\int_{\eta_a(x)<t_a} f(x,a)(\eta_a(x)-t_a)dP_{X|a}(x)>0.
\end{eqnarray*}
This contradicts \eqref{lem3:eq3}.

(2) When $t_a=s_G$, we have $f(x,a)=I(\eta(x,a)\geq t_a)$. Then,
\begin{equation*}
\int_{\eta_a(x)>t_a} (f(x,a)-1)(\eta_a(x)-t_a)dP_{X|a}(x)+\int_{\eta_a(x)<t_a} f(x,a)(\eta_a(x)-t_a)dP_{X|a}(x)
=0.
\end{equation*}
This equation  holds if and only if 
$f(x,a)=f_G(x,a)$ almost surely on the set
$\{\eta_a(x)>t_a\}\cup \{\eta_a(x)>t_a\}$. Then,
\begin{equation*}
{\int f(x,a)dP_{X|a}(x)}-{\int f_G(x,a)dP_{X|a}(x)}={\int_{\eta(x,a)=t_a} (f(x,a)-1)dP_{X|a}(x)}\leq 0.
\end{equation*}
Again, we have a contradiction since  $\int f(x,a)dP_{X|a}(x)-\int f_G(x,a)dP_{X|a}(x)>0$.

As a result, we can conclude that, for all $a\in\mathcal{A}$,  $$\int f(x,a)dP_{X|a}(x)\leq \int f_G(x,a)dP_{X|a}(x).$$
Moreover, there exists at least one  $a\in\mathcal{A}$ such that
 $$\int f(x,a)dP_{X|a}(x)< \int f_G(x,a)dP_{X|a}(x).$$
 Otherwise, $f$ is also a GWTR. This finishes the proof.
\end{proof}

We adopt the following strategy to prove Theorem \ref{thm-GWTR}. 
Consider any classifier $f$ that satisfies predictive parity, which is not a GWTR. 
We will show that there exist a GWTR
satisfying predictive parity with a smaller risk. 
%In other words, any fair non-GWTR can be improved, and 
Thus, at least one of the fair Bayes-optimal classifier under predictive parity is a GWTR.

Recall that $\hat{Y}_f$ is the prediction of $f$ at  $(x,a)$. 
As $f$ satisfies predictive parity, there exists  $s_f\in[0,1]$ such that
$$P(Y=1|A=a,\hat{Y}_f=1)=s_f\leq 1 \ \ \ \text{for } a\in\mathcal{A}.$$
%Note that $s_f$ is the positive predictive rate (PPV) of $f$.
%Let $g_a(t,\tau)$ be defined in Lemma \ref{lem1}. 
% Without loss of generality, we assume $s_f<1$. Otherwise we have 
% \begin{eqnarray*}
% s_f&=P(Y=1|\hat{Y}_f=1,A=a)=\frac{p_{Y|a}P(\hat{Y}_f=1|Y=1,A=a)}{P(\hat{Y}_f=1|A=a)}\\
% &=\frac{p_{Y|a}\int f(x,a)dP_{X|A=a,Y=1}(x)}{\int f(x,a)dP_{X|a}(x)}
% =\frac{\int f(x,a)\eta_a(x)dP_{X|a}(x)}{\int f(x,a)dP_{X|a}(x)}=1.
% \end{eqnarray*}
% Then, we have $f(x,a)=0$ when $\eta_a(x)<1$ and $f(x,a)$ can be any value when $\eta_a(x)=1$. This case, $f(x,a)=I(\eta_a(x)=1)$ is the fair Bayes-optimal classifier.

We set
\begin{equation}
s^\dag =\left\{ \begin{array}{lcc}
        \max\,(s_f,\max_a\, p_{Y|a}), && \max\,(s_f,\max_a\, p_{Y|a})>c; \\
      c+\ep,   &&  \max\,(s_f,\max_a\, p_{Y|a})\leq c.
    \end{array}\right.
\end{equation}
Here, $\ep<1-c$ is a small constant such that there exists a $a\in\mathcal{A}$ with
$P(\eta_a(X)>c+\ep|A=a)>0$.
By our construction, we have $s^\dag\in[\max_a p_{Y|a},1]$ and, according to Lemma \ref{lem2}, there exist combinations $(t^\dag_a,\tau^\dag_a)_{a=1}^{|\mathcal{A}|}$ such that, for $g_a$ from \eqref{ga}, 
\begin{equation}\label{t_tau_a}
    g_a(t^\dag_{a},\tau^\dag_{a})=s^\dag, \ \ \ \ a\in\mathcal{A}.
\end{equation}

Now, we consider the GWTR $f^\dag$ defined for all $x\in\X$ and $a\in\A$ by
\begin{equation}\label{fd}
    f^\dag(x,a)=I(\eta_a(x)>t^\dag_a)+\tau^\dag_a I(\eta_a(x)=t^\dag_a).
\end{equation}
Here, we  follow the construction in Lemma \ref{lem2} to set
$t_a^\dag=\sup \{t: g_a(t)<s^\dag\}$, and let $\tau_a(x)\equiv \tau_a^\dag$ be a constant function. Moreover, we set $\tau_a^\dag= 1$ whenever $P(\eta_a(X)>t_a^\dag|A=a)=0$ or  $P(\eta_a(X)=t_a^\dag|A=a)=0$.
Clearly, $f^\dag$ satisfies predictive parity, 
and thus it is enough to show that $f^\dag$ has a smaller risk  than $f$, i.e., $R_c(f^\dag)-R_c(f)< 0$.
Now, we can write
\begin{eqnarray*}
R_c(f)%= (1-c)P(\hat{Y}_f=0,Y=1)+c\cdot P(\hat{Y}_f=1,Y=0)\\
&=&\sum_{a\in\mathcal{A}}\left[(1-c)P(\hat{Y}_f=0,Y=1,A=a)+c\cdot P(\hat{Y}_f=1,Y=0,A=a)\right]\\
&=&
(1-c)P(Y=1)
- \sum_{a\in\mathcal{A}}p_a(1-c)\int f(x,a)\eta_a(x)dP_{X|a}(x)\\
&&+\sum_{a\in\mathcal{A}}p_ac\int f(x,a)(1-\eta_a(x))dP_{X|a}(x).
\end{eqnarray*}
Next, for any classifier $f$ satisfying predictive parity with positive predictive value $s_f$, we have that
\begin{eqnarray*}
s_f&=P(Y=1|\hat{Y}_f=1,A=a)=\frac{p_{Y|a}P(\hat{Y}_f=1|Y=1,A=a)}{P(\hat{Y}_f=1|A=a)}\\
&=\frac{p_{Y|a}\int f(x,a)dP_{X|A=a,Y=1}(x)}{\int f(x,a)dP_{X|a}(x)}
=\frac{\int f(x,a)\eta_a(x)dP_{X|a}(x)}{\int f(x,a)dP_{X|a}(x)}.
\end{eqnarray*}
It follows that $R_c(f)$ further equals
\begin{eqnarray*}
&&\sum_{a\in\mathcal{A}}p_a\int f(x,a)(c-\eta_a(x))dP_{X|a}(x)+(1-c)P(Y=1)\\
&=&\sum_{a\in\mathcal{A}}p_a(c-s_f)\int f(x,a)dP_{X|a}(x)+(1-c)P(Y=1).
\end{eqnarray*}
As a result, $R_c(f^\dag)-R_c(f)$ equals
\begin{eqnarray*}
% R_c(f^\dag)-R_c(f)&=& 
\sum_{a\in\mathcal{A}}p_a(c-s^\dag)\int f^\dag(x,a)dP_{X|a}(x)
%\\ &&\qquad\qquad\qquad\qquad
 -\sum_{a\in\mathcal{A}}p_a(c-s_f)\int f(x,a)dP_{X|a}(x).
\end{eqnarray*}

We consider the following three cases in order: (1) $s_f\leq \min(c,\max_ap_{Y|a})$, (2) $s_f>  \max(c,\max_ap_{Y|a})$, and (3) $\min(c,\max_ap_{Y|a})< s_f \leq \max(c,\max_ap_{Y|a})$.

~

(1) Case 1: $s_f\leq \min(c,\max_ap_{Y|a})$.

It is clear that $ R_c(f^\dag)-R_c(f)< 0$ since $c - s^\dag< 0$ and $c-s_f\geq 0$.

~

(2) Case 2: $s_f>  \max(c,\max_ap_{Y|a})$. 

We have from the definition of $s^\dag$, \eqref{t_tau_a} and \eqref{monotau} that
for all $a\in\A$,
$s^\dag=s_f\geq t^\dag_a$. 
%\ed{i can see $s_f \geq t_a$ but why do we have a strict inequality?}
Further, we can write 
$$R_c(f^\dag)-R_c(f)= 
\sum_{a\in\mathcal{A}}p_a(c-s_f)\int [f^\dag(x,a)-f(x,a)]dP_{X|a}(x).$$
% Now,
% we define the following sets: 
% \begin{equation*}
%     G_{a,+}=\{x: f^\dag(x,a)>f(x,a)\},\quad
%     G_{a,-}=\{x: f^\dag(x,a)<f(x,a)\},
% \end{equation*} 
% and
% $G_{a,0}=\{x: f^\dag(x,a)=f(x,a)\}$. 
% From the definition of $f^\dag$ in \eqref{fd},
% $f^\dag(x,a)\in\{0,\tau^\dag_a,1\}$ for all $x\in\X$ and $a\in\A$, 
% thus again by the definition of $f^\dag$ in \eqref{fd},
% we have  $\eta_a(x)\geq t^\dag_a$ on $G_{a,+}$ and $0 \leq \eta_a(x)\leq t^\dag_a$ on $G_{a,-}$.  
% We further denote, for $j \in \{+,-\}$,  
% \begin{align*}
%     &S(a,f,f^\dag,j)=\int_{G_a,j} [f^\dag(x,a)-f(x,a)]\eta_a(x)dP_{X|a}(x),\\  
%     &T(a,f,f^\dag,j)=\int_{G_a,j}
%     [f^\dag(x,a)-f(x,a)]dP_{X|a}(x).
% \end{align*}
% By definition,
% \begin{align*}
%     &S(a,f,f^\dag,+) \geq 0,\qquad S(a,f,f^\dag,-)\leq 0,\\  
%     &T(a,f,f^\dag,+) \geq 0,\qquad T(a,f,f^\dag,-)\leq 0.
% \end{align*}
% It remains to prove that
% \begin{equation}\label{compare1}\int [f^\dag(x,a)-f(x,a)]dP_{X|a}(x)=T(a,f,f^\dag,+)+T(a,f,f^\dag,-) > 0, \ \ \ \text{ for }a\in\mathcal{A}.\end{equation}
% We observe that 
% \begin{equation}\label{sfandppv}
%     \frac{\int f(x,a)\eta_a(x)dP_{X|a}(x)}{\int f(x,a)dP_{X|a}(x)}=\frac{\int f^\dag(x,a)\eta_a(x)dP_{X|a}(x)}{\int f^\dag(x,a)dP_{X|a}(x)}=s_f.
% \end{equation}
% By Lemma \ref{lem3},  $T(a,f,f^\dag,+)+T(a,f,f^\dag,-)\neq 0$
% since $f$ is not a GWTR. We first show that $s_f>t_a$ when $f$ is not a GWTR. 
Suppose that $s_f=t_a$. 
Specifically, $s_f=s^\dag=t_a^\dag$ equals
\begin{eqnarray*}
t_a^\dag=\frac{\int f^\dag(x,a)\eta_a(x)dP_{X|a}(x)}{\int f^\dag(x,a)dP_{X|a}(x)}
&=&\frac{\int_{\eta_a(x)>t_a^\dag}\eta_a(x)dP_{X|a}(x)+\tau^\dag_a\int_{\eta_a(x)=t_a^\dag}\eta_a(x)dP_{X|a}(x)}{\int_{\eta_a(x)>t_a^\dag}dP_{X|a}(x)+\tau^\dag_a\int_{\eta_a(x)=t_a^\dag}dP_{X|a}(x)}.
\end{eqnarray*}
This implies $P(\eta_a(X)>t_a^\dag|A=a)=0$ and, by our construction, $\tau_a(x)\equiv \tau^\dag_a= 1$. Thus, $f^\dag$ satisfies the condition \eqref{condition_tau} in Lemma \ref{lem3}. As a result, we have,
$$\sum_{a=1}^{|\mathcal{A}|}p_a{\int [f^\dag(x,a)-f(x,a)]dP_{X|a}(x)}>0.$$
This implies $ R_c(f^\dag)-R_c(f)< 0$ since $c-s_f<0$.

~

(3) Case 3: $\min(c,\max_ap_{Y|a})<  s_f \leq \max(c,\max_ap_{Y|a})$. 

In fact, the case 3 can be further divided into  two possible sub-cases, depending on the relations between  $c$ and  $\max_ap_{Y|a}$: (3.i) $\max_a p_{Y|a}<s_f \leq c$, and (3.ii) $c<s_f\leq \max_a p_{Y|a}$. 

Sub-case (3.i): 
In this case, we have $s^\dag=c+\ep$ and $s_f\leq c$. Then,
\begin{eqnarray*}
R_c(f^\dag)-R_c(f)&=& 
\sum_{a\in\mathcal{A}}p_a(c-s^\dag)\int f^\dag(x,a)dP_{X|a}(x)
 -\sum_{a\in\mathcal{A}}p_a(c-s_f)\int f(x,a)dP_{X|a}(x)\\
 &\leq& -\ep \sum_{a\in\mathcal{A}}p_a P(\eta_a(X)>c+\ep|A=a)<0
\end{eqnarray*}

Sub-case (3.ii):
 In this case, we partition $\mathcal{A}=\mathcal{A}_1\cup\mathcal{A}_2$ into the sets $\mathcal{A}_1=\{a: p_{Y|a}\leq s_f\}$ and $\mathcal{A}_2=\{a: p_{Y|a}>s_f\}$. 
Denoting $s_a^\flat=\max\,(s_f, p_{Y|a})$,
it is clear that $s_f\leq s^\flat_a\leq s^\dag$. According to Lemma \ref{lem2}, there exist combinations $( t^\flat_a,\tau^\flat_a)_{a=1}^{|\mathcal{A}|}$ such that 
$$g_a({t}^\flat_{a},\tau^\flat_{a})=s^\flat_a, \ \ \ \ a\in\mathcal{A}.$$
We now consider  the classifier $f^\flat$ defined for all $x\in\X$ and $a\in\A$ by
$$f^\flat(x,a)=I(\eta_a(x)>t^\flat_a)+\tau^\flat_a I(\eta_a(x)=t^\flat_a).$$
Again, we  follow the construction in Lemma \ref{lem2} to set
$t_a^\flat=\sup \{t: g_a(t)<s_a^\flat\}$, and let $\tau_a(x)\equiv \tau_a^\flat$ be a constant function. Moreover, we set $\tau_a^\flat= 1$ whenever $P(\eta_a(X)>t_a^\flat|A=a)=0$ or  $P(\eta_a(X)=t_a^\flat|A=a)=0$.

Note that $s_f=s_a^\flat>c$ for $a\in\mathcal{A}_1$. Following the same argument as in case (2), we have, 
\begin{equation}\label{matha1}
    \sum_{a\in\mathcal{A}_1}(c-s_a^\flat) \int f^\flat(x,a)dP_{X|a}(x)- \sum_{a\in\mathcal{A}_1}(c-s_f)f(X,A)dP_{X|a}(x)<0.
\end{equation}
For $a\in\mathcal{A}_2$, we have $s_a^\flat=p_{Y|a}$, which implies that $(t_a^\flat,\tau_a^\flat)=(0,1)$. As a consequence, for $a\in\mathcal{A}_2$,
\begin{eqnarray}\label{matha2}
 \nonumber&&(c-s_a^\flat)\int f^\flat(x,a)dP_{X|a}(x)-(c-s_f)\int f(x,a)dP_{X|a}(x)\\
 &=&(c-p_{Y|a})-(c-s_f)\int f(x,a)dP_{X|a}(x)< 0
\end{eqnarray}
since $\int f(x,a)dP_{X|a}(x)\leq 1$ and $c-p_{Y|a}<c- s_f\leq 0$.
Combining \eqref{matha1} and \eqref{matha2} shows that $R_c(f^\flat)-R_c(f)$ equals
\begin{eqnarray*}
%R_c(f^\flat)-R_c(f)&=& 
\sum_{a\in\mathcal{A}}p_a(c-s^\flat)\int f^\flat(x,a)dP_{X|a}(x)
-\sum_{a\in\mathcal{A}}p_a(c-s_f)\int f(x,a)dP_{X|a}(x)< 0.
\end{eqnarray*}
Now, under the Condition \ref{suff-con}, we have, for $a\in\mathcal{A}$,
$$ g_a(c,1)\geq\max_ap_{Y|a}=s^\dag=g_a(t_a^\dag,\tau_a^\dag) \geq s_a^\flat= g_a(t_a^\flat,\tau_a^\flat).$$ 
From \eqref{monotau}, we have
$t_a^\flat\leq t_a^\dag \leq c.$
Thus, $ R_c(f^\dag)-R_c(f^\flat)$ equals
\begin{eqnarray*}
&&\sum_{a\in\mathcal{A}}p_a\int (c-\eta_a(x))[f^\dag(x,a)-f^\flat(x,a)]dP_{X|a}(x)\\
&=&\sum_{a\in\mathcal{A}}p_a\int(\eta_a(x)-c)[f^\flat(x,a)-f^\dag(x,a)]dP_{X|a}(x)\\
&\leq& \sum_{a\in\mathcal{A}}p_a\int (\eta_a(x)-c)I({t_a^\flat\leq \eta_a(x)\leq t^\dag_a})dP_{X|a}(x)\leq 0.
\end{eqnarray*}
As a result,
$$ R_c(f^\dag)-R_c(f)= R_c(f^\dag)-R_c(f^\flat)+ R_c(f^\flat)-R_c(f)< 0.$$

This finishes the proof.
%In summary, for all three cases, we have shown that $R_c(f^\dag)-R_c(f)\leq 0$. In other words,
%for any classifier satisfies predictive parity, we can find a GWTR that satisfies predictive parity and has a smaller misclassification rate. Thus, the fair Bayes-optimal classifier under predictive parity is a GWTR.

\section{Proof of Theorem \ref{thm-imGWT}}
% \begin{theorem}\label{thm-imGWT}
% Suppose that condition \ref{unsuff-con} holds. 
% Denote $t_1=\inf\{t: P(Y=1|\,\eta_1(X)\geq t,A=1)>p_{Y|A=0}\}$. When $P(c<\eta_A(X)<t_1|A=1)>0$, there is a $\underline{p_1}>0$ such that, for all $p_1\leq \underline{p_1}$, no fair Bayes-optimal classifier under predictive parity is a GWTR.

% \end{theorem}
Let $f_G$ be any GWTR, say of the form
$$f_{G}(x,a)=I(\eta_a(x)>t_{G,a})+\tau_{G,a}(x)I(\eta_a(x)=t_{G,a}),$$ 
satisfying predictive parity with
$$P(Y=1|\hat{Y}_{f_{G}(x,a)})=1,A=a)=s_{{G}}, \ \ \ \text{ for } a\in\mathcal{A}.$$
According to Lemma \ref{lem1}, we have $p_{Y|A=0}=P(Y=1|\eta_0(x)\geq 0,A=0)\leq s_{f_{G}}$.
By the definition of $t_1$,
we have $c<t_{1}\leq t_{G_1}$. Thus, $P(c<\eta_1(X)<t_{G_1}|A=1)>P(c<\eta_1(X)<t_{1}|A=1)>0$.

Denote $s_{NG}=P(Y=1|\eta_1(X)\geq c,A=1)$. We have $s_{G}\geq s_{NG}$, since $t_{G,1}>c$. Further, following the same argument as in Lemma \ref{lem2}, there exist $(t_0,\tau_0)$ such that
$$
\frac{P_{Y|a} [P(\eta_0(X)<t_0|A=0,Y=1)+\tau_0P(\eta_0(X)=t_0|A=0,Y=1)]}{P(\eta_0(X)<t_0|A=0)+\tau_0P(\eta_0(X)=t_0|A=0)}=s_{NG}.
$$
We consider the following classifier $f_{NG}$, which is not a GWTR:
\begin{equation}
  f_{NG}(x,a)=\left\{  \begin{array}{lcc}
    I(\eta_a(x)\geq c), && a=1;\\
    I(\eta_a(x)< t_0)+\tau_0I(\eta_a(x)= t_0) ,&& a=0.
    \end{array}\right.
\end{equation}

By construction,  $f_{NG}$ satisfies predictive parity. Moreover, when $p_1>\frac{2}{2+\delta_1\delta_2}$, we have
\begin{align*}
&R_c( f_{G})-R_c(f_{NG})=p_1\int (c-\eta_1(x))[f_{G}(x,1)- f_{NG}(x,1)]dP_{X|1}(x)\\
&+p_0\int (c-\eta_0(x))[f_{G}(x,1)- f_{NG}(x,0)]dP_{X|0}(x)\\
&\geq p_1\int_{c<\eta_1(x)<t_{G_1}} (c-\eta_1(x))dP_{X|1}(x)-2p_0
\geq p_1\int_{c+\delta_2<\eta_1(x)<t_{G_1}} (c-\eta_1(x))dP_{X|1}(x)-2p_0\\
&\geq \delta_1\delta_2p_1-2(1-p_1)>0.
\end{align*}
Thus, we have constructed a classifier that is not a GWTR satisfying predictive parity and achieving a smaller cost-sensitive risk than any fair GWTR. We can conclude that no fair Bayes-optimal classifier under predictive parity is a GWTR.

\section{Fair and Unconstrained Bayes-optimal Classifiers of the Synthetic Model}\label{appendix-synth}
In this section, we derive the unconstrained and fair Bayes-optimal classifiers for our synthetic model used in Section \ref{synth}. 
Consider the following data distribution for $(X,A,Y)$ where $A\in\{0,1\}$, $Y\in\{0,1\}$ with

\begin{itemize}
    \item For $a\in\{0,1\}$, $P(A=a)=p_a$ and $P(Y=1|A=a)=1-P(Y=0|A=a)=p_{Y|a}$;
    \item For $(a,y)\in\{0,1\}^2$, $X|A=a,Y=y\sim \mathcal{N}(\mu_{a,y},\sigma^2I_2)$ with $\mu_{a,y}=(2a-1,2y-1)^\top$.
\end{itemize}

Denote by $g_{a,y}(x)=\frac{1}{2\pi\sigma^2}\exp(-\frac1{2\sigma^2}\|x-\mu_{a,y}\|^2)$ the conditional density function of $X$ given $A=a$ and $Y=y$.  we have
\begin{eqnarray*}
   \eta_a(x)&=&P(Y=1|X=x,A=a)
   =\frac{p_{Y|a}g_{a,1}(x) }{p_{Y|a} g_{a,1}(x)+(1-p_{Y|a})g_{a,0}(x)}\\
   &=&\frac{p_{Y|a}\exp(-\frac1{2\sigma^2}\|x-\mu_{a,1}\|^2)}{p_{Y|a}\exp(-\frac1{2\sigma^2}\|x-\mu_{a,1}\|^2)+(1-p_{Y|a})\exp(-\frac1{2\sigma^2}\|x-\mu_{a,0}\|^2)}.
\end{eqnarray*}
Then, the unconstrained deterministic Bayes-optimal classifier $f^\star$ is
\begin{eqnarray*}
f^\star(x,a)&=&I(\eta_a(x)>c)\\
&=& I \left((1-c){p_{Y|a}\exp(-\frac1{2\sigma^2}\|x-\mu_{a,1}\|^2)>c(1-p_{Y|a})\exp(-\frac1{2\sigma^2}\|x-\mu_{a,0}\|^2)}\right)\\
&=& I\left(x^\top(\mu_{a,0}-\mu_{a,1})<\log\frac{(1-c)p_{Y|a}}{c(1-p_{Y|a})}\right).
\end{eqnarray*}

For given $p_{Y|A=0}$, Condition \ref{suff-con} is equivalent to
\begin{eqnarray}\label{suff-gaussian}
\nonumber p_{Y|A=1}&\leq& P(Y=1|\eta_A(X)>c,A=0)
\frac{p_{Y|A=0}P(\eta_A(X)>c|A=0,Y=1)}{P(\eta_A(X)>c|A=0)}\\
\nonumber&=&\frac{p_{Y|A=0}P\left(X^\top(\mu_{0,0}-\mu_{0,1})<\log\frac{(1-c)p_{Y|A=0}}{c(1-p_{Y|A=0})}|A=0,Y=1\right)}
{P\left(X^\top(\mu_{0,0}-\mu_{0,1})<\log\frac{(1-c)p_{Y|A=0}}{c(1-p_{Y|A=0})}|A=0\right)}\\
&=&
\frac{p_{Y|A=0}\bar\Phi\left(\frac{\sigma\log(q_0(c))}{2}-\frac{1}{\sigma}\right)}{p_{Y|A=0}\bar\Phi\left(\frac{\sigma\log(q_0(c))}{2}-\frac{1}{\sigma}\right)+(1-p_{Y|A=0})\bar\Phi\left(\frac{\sigma\log(q_0(c))}{2}+\frac{1}{\sigma}\right)},
\end{eqnarray}
where $q_a(c)=\frac{c(1-p_{Y|a})}{(1-c)p_{Y|a}}$ and $\bar\Phi(t)=1-\Phi(t)$ with $\Phi(t)$ the cumulative distribution function of the standard normal distribution.

Now we consider fair Bayes optimal classifiers under \eqref{suff-gaussian}.
We consider the GWTR $f_{t_1,t_0}$ such that for $a\in\{0,1\}$ and all $x\in\X$,
$f_{t_1,t_0}(x,a)=I(\eta_a(x)>t_a)$.
Following the same argument as in \eqref{suff-gaussian}, we have
\begin{equation*}
P(Y=1|\eta_a(x)>t_a,A=a)=\frac{p_{Y|a}\bar\Phi\left(\frac{\sigma\log(q_a(t_a))}{2}-\frac{1}{\sigma}\right)}{p_{Y|a}\bar\Phi\left(\frac{\sigma\log(q_a(t_a))}{2}-\frac{1}{\sigma}\right)+(1-p_{Y|a})\bar\Phi\left(\frac{\sigma\log(q_a(t_a))}{2}+\frac{1}{\sigma}\right)}.
\end{equation*}
Then, $f_{t_1,t_0}$ satisfies predictive parity if
\begin{equation*}%\label{fair-constraint}
\frac{p_{Y|A=1}\bar\Phi\left(\frac{\sigma\log(q_1(t_1))}{2}-\frac{1}{\sigma}\right)}{(1-p_{Y|A=1})\bar\Phi\left(\frac{\sigma\log(q_1(t_1))}{2}+\frac{1}{\sigma}\right)}=\frac{p_{Y|A=0}\bar\Phi\left(\frac{\sigma\log(q_0(t_0))}{2}-\frac{1}{\sigma}\right)}{(1-p_{Y|A=0})\bar\Phi\left(\frac{\sigma\log(q_0(t_0))}{2}+\frac{1}{\sigma}\right)}.
\end{equation*}
Note that, as a function of $t_a$, $t_a\mapsto P(Y=1|\eta_a(x)>t_a,A=a)$ is strictly monotone increasing. Thus, for $T_1(t)=t$, there exists a function $t\mapsto T_0(t)$ such that
\begin{equation*}%\label{fair-constraint}
\frac{p_{Y|A=1}\bar\Phi\left(\frac{\sigma\log(q_1(t))}{2}-\frac{1}{\sigma}\right)}{(1-p_{Y|A=1})\bar\Phi\left(\frac{\sigma\log(q_1(t))}{2}+\frac{1}{\sigma}\right)}=\frac{p_{Y|A=0}\bar\Phi\left(\frac{\sigma\log(q_0(T_0(t)))}{2}-\frac{1}{\sigma}\right)}{(1-p_{Y|A=0})\bar\Phi\left(\frac{\sigma\log(q_0(T_0(t)))}{2}+\frac{1}{\sigma}\right)}.
\end{equation*}
Then $f_{T_1(t),T_0(t)}$ satisfies predictive parity and its cost-sensitive risk $R_c(f_{T_1(t),T_0(t)})$ is 
\begin{eqnarray*}
%R_c(f_{T_1(t),T_0(t)})
%&&c\cdot P(\hat Y_{f_{T_1(t),T_0(t)}}=1,Y=0)+(1-c) P(\hat Y_{f_{T_1(t),T_0(t)}}=0,Y=1)\\
&&\sum_{a\in\{0,1\}} cp_a(1-p_{Y|a})P(\eta_a(X)\ge  t_a|A=a,Y=0)\\
&&\qquad\qquad\qquad+ \sum_{a\in\{0,1\}} (1-c)p_ap_{Y|a}P(\eta_a(X)<t_a|A=a,Y=1)\\
&=&\sum_{a\in\{0,1\}}(1-c)p_ap_{Y|a}\Phi\left(\frac{\sigma\log(q_a(T_a(t)))}{2}-\frac{1}{\sigma}\right)\\
&&\qquad\qquad\qquad
+
\sum_{a\in\{0,1\}}cp_a(1-p_{Y|a})\bar\Phi\left(\frac{\sigma\log(q_a(T_a(t)))}{2}+\frac{1}{\sigma}\right).
\end{eqnarray*}
Let $t^\star$ be defined as
$$t^\star=\underset{t\in[0,1]}{\text{argmin}}\,R_c(f_{T_1(t),T_0(t)}).$$
Thus, under \eqref{suff-gaussian}, the fair Bayes-optimal classifier under predictive parity is given by $f_{t^\star,T_0(t^\star)}$. This classifier can be computed numerically as both $T_0(t)$ and $t^\star$ can be found numerically.

\section{Experimental Settings and More Simulation Results}\label{sec:training_details}
{\bf Training details.} Our experiments are conducted on a personal computer with an Intel(R) Core(TM) i9-9920X CPU @ 3.50Ghz and an NVIDIA GeForce RTX 2080 Ti GPU. For the Adult and COMPAS datasets, we employ  the same training settings as in \cite{CHS2020}. 
We train the conditional probability predictor using a three-layer fully connected net with 32 neurons in the hidden  layers.  
For the CelebA dateset, we adopt the same settings in \cite{WQK2020} to train the conditional probability predictor with ResNet-50, pre-trained on the ImageNet dataset. 
We also apply the dropout technique with  $p=0.5$ to improve the model performance.
In all the simulations, we use the Adam optimizer with the default parameters.  The details are summarized in Table \ref{detail}.

\begin{table}[ht]
\caption{Training details for three datasets}
\label{detail}
\vskip 0.1in
\begin{center}
\begin{small}
\begin{sc}
\begin{tabular}{c|c|c|c}\hline
  Dataset        & Adult Census&  COMPAS & Celeba \\\hline
 Batch size      &512          &  2048   &    32     \\
Training Epochs  &  200        & 500     &    50     \\
Optimizer        & Adam    &   Adam  &Adam \\
Learning rate    &  1e-1       &5e-4     &   1e-4       \\
Pre-Training       &  N/A        & N/A    &  ImageNet \\
Dropout      &  N/A        & N/A    &  0.5 
\\\hline
\end{tabular}
\end{sc}
\end{small}
\end{center}
\vskip -0.1in
\end{table}

% \begin{figure}
%     \centering
%     \includegraphics[scale=0.35]{}
%     \caption{Caption}
%     \label{fig:my_label}
% \end{figure}

% \begin{figure}
%     \centering
%     \includegraphics[scale=0.35]{}
%     \caption{Caption}
%     \label{fig:my_label}
% \end{figure}
\begin{figure}[b]
    \centering
    \includegraphics[scale=0.36]{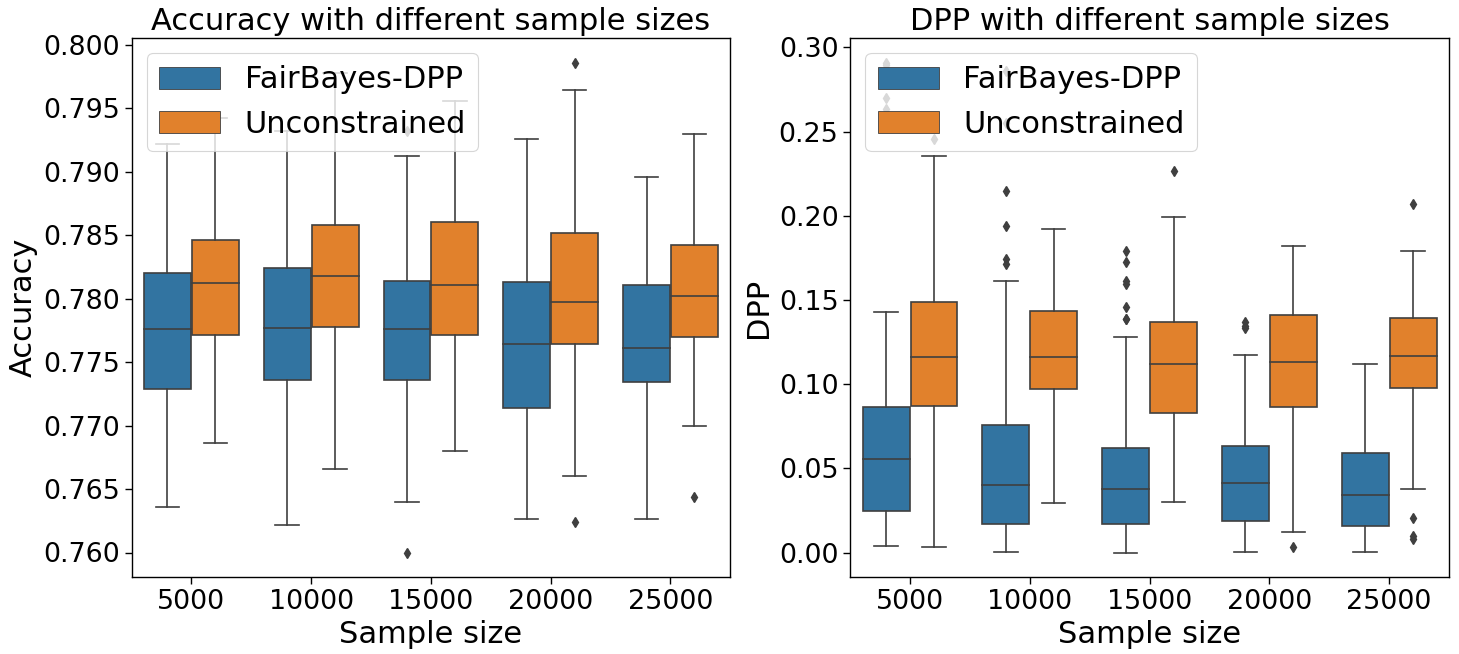}
        \vspace{-0.2cm}
    \caption{Accuracy and DPP as a function of sample size.}
    \vspace{-0.3cm}
    \label{fig-sample-size}
\end{figure}

\subsection{Synthetic Data}
We conduct more experiments to evaluate the performance of our FairBayes-DPP algorithm under different model and training settings. 
We consider the same synthetic model as in Section \ref{synth} with different settings on sample size, proportion $P(A=0)$ of the minority group and cost parameters. We also extend the synthetic model to a multi-class protected attribute. 
In all scenarios, we repeat the experiments 100 times\footnote{The randomness of the experiment comes from the random generation of the training and test data sets.}.
\subsubsection{Sample Size}

We first evaluate FairBayes-DPP with different sample sizes.
In the experiment, we fix $c=0.5$, $p(A=1)=0.3$, $p(Y=1|A=1)=0.6$ and $p(Y=1|A=1)=0.2$. We further fix the number of test data points to be $5000$, and change the number of training data points from $5000$ to $25000$. 
The simulation results are presented in  Figure \ref{fig-sample-size}. 
It can be seen that FairBayes-DPP has a smaller disparity than the unconstrained classifier. 
As the sample size grows, the performance of FairBayes-DPP improves, since the estimation error reduces with more training data points.

\begin{figure}[t]
    \centering
    \includegraphics[scale=0.36]{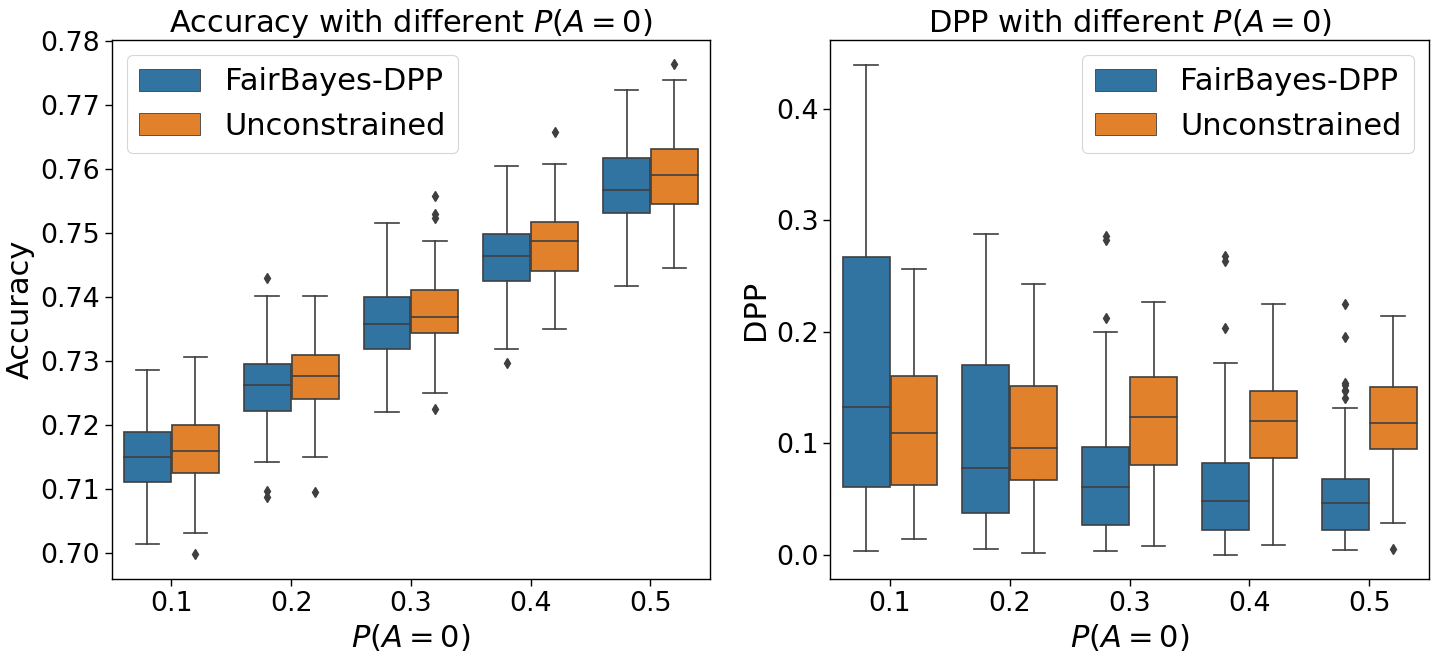}
        \vspace{-0.2cm}
    \caption{Accuracy and DPP as a function of $P(A=0).$}
    \vspace{-0.3cm}
    \label{fig-pa}
\end{figure}

\begin{figure}[t]
    \centering
    \includegraphics[scale=0.36]{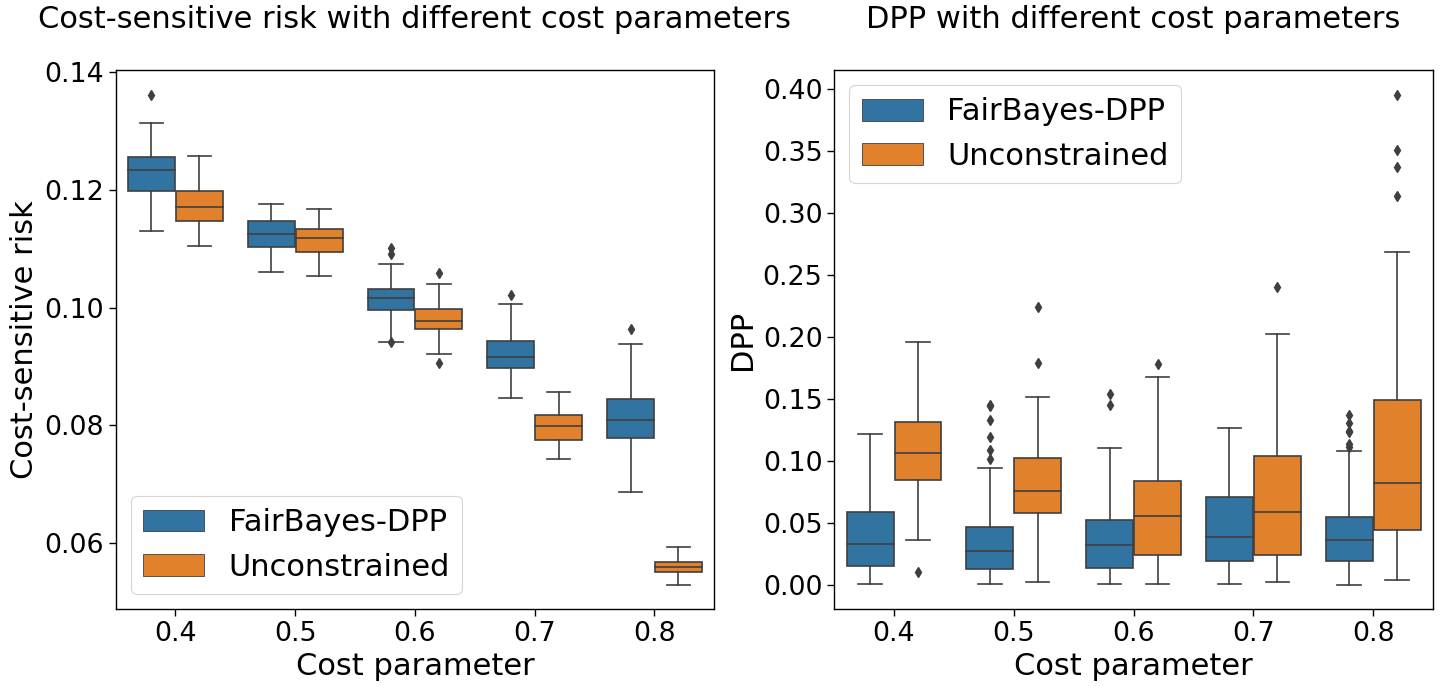}
        \vspace{-0.2cm}
    \caption{Cost-sensitive risk and DPP as a function of the cost parameter.}
    \vspace{-0.3cm}
    \label{fig-cost}
\end{figure}
\subsubsection{Proportion of Minority Group}\label{subsec:minority}
Next, we evaluate the effect of the proportion $P(A=0)$ of the minority group on the performance of FairBayes-DPP. 
We fix $c=0.5$, $p(Y=1|A=1)=0.6$, $p(Y=1|A=0)=0.2$, and vary $P(A=0)$
% \ed{or $P(A=0)$?}
from $0.5$ to $0.9$. 
Moreover, we set the training data size and test data size to be $25000$ and $5000$, respectively. 
Figure \ref{fig-pa} presents the simulation results. 

We observe that, for both FairBayes-DPP and unconstrained learning, the test accuracy increases with $P(A=0)$. 
The sample complexity of learning the unconstrained classifier should intuitively depend on the sample size of the smallest group. 
When $P(A=0)$ is very small, the estimator of $\eta_0$ has large variability and results in a small test accuracy. 

We also observe that the performance of FairBayes-DPP is unstable when $P(A=0)$ is very small. This {limitation} is caused by the unstable estimation of $\eta_0$, which is used by  FairBayes-DPP to adjusts the per-class thresholds.  
As we can see, the performance of FairBayes-DPP improves rapidly when $P(A=0)$ grows. We emphasize that the success of  FairBayes-DPP relies on the consistent estimation of the per-group feature-conditional probabilities of the labels.

\subsubsection{Cost Parameter}
 We then evaluate the effect of cost parameter $c$. We fix $P(Y=1)=0.3$, $p(Y=1|A=1)=0.5$, $p(Y=1|A=0)=0.2$, and vary $c$ from $0.4$ to $0.8$.  
 Again, we set the training and test data sizes to be $25000$ and $5000$, respectively. We present the simulation results  in Figure \ref{fig-pa}. 
We  observe that  FairBayes-DPP successfully mitigates disparity with a wide range of cost parameters.

\begin{table}[b]
\caption{Parameters of synthetic model for multi-clase protected attribute.}
\label{model-para}
\vskip 0.1in
\begin{center}
\begin{small}
\begin{sc}
\begin{tabular}{c|ccccc}\hline
 \multicolumn{6}{c}{$|\mathcal{A}=3|$}\\\hline
  $a$          &     1       &      2   &    3     &       &    \\\hline
 $p_a$         &    0.3      &     0.3  &    0.4   &         &     \\
$p_{Y|a}$      &    0.2      &     0.6  &    0.3   &         &     \\\hline
 \multicolumn{6}{c}{$|\mathcal{A}=5|$}\\\hline
  $a$          &     1       &      2   &    3     &   4    &  5  \\\hline
 $p_a$         &    0.2      &     0.3  &    0.2   &    0.15     &   0.15  \\
$p_{Y|a}$      &    0.2      &     0.6  &    0.3   &    0.4     & 0.2    \\\hline\end{tabular}
\end{sc}
\end{small}
\end{center}
\vskip -0.1in
\end{table}

\subsubsection{Multi-class Protected Attribute}
Finally, we study a multi-class protected attribute. 
We generate data $a\in\mathcal{A}=\{1,2,...,|\mathcal{A}|\}$ and $y\in\{0,1\}$ by setting $\mu_{ay}=(2y-1)e_a$, where $e_a\in\mathbb{R}^{|\mathcal{A}|}$ the unit vector with the $a$-th element equal to unity.
Conditional on $A=a$ and $Y=y$, $X$ is generated from a multivariate
Gaussian distribution $N(\mu_{ay},2^2I_{|\mathcal{A}|})$.

We consider two cases, $|\mathcal{A}|=3$ and $|\mathcal{A}|=5$, with the model parameters presented in Table \ref{model-para}. For both cases, we set $c=0.5$, the training data sample size as $50000$ and the test data sample size as $5000$. 
We present the simulation results in Figure \ref{fig-multia}. Again, FairBayes-DPP
achieves superior performance in preserving accuracy and mitigating bias.

\begin{figure}[t]
    \centering
    \includegraphics[scale=0.36]{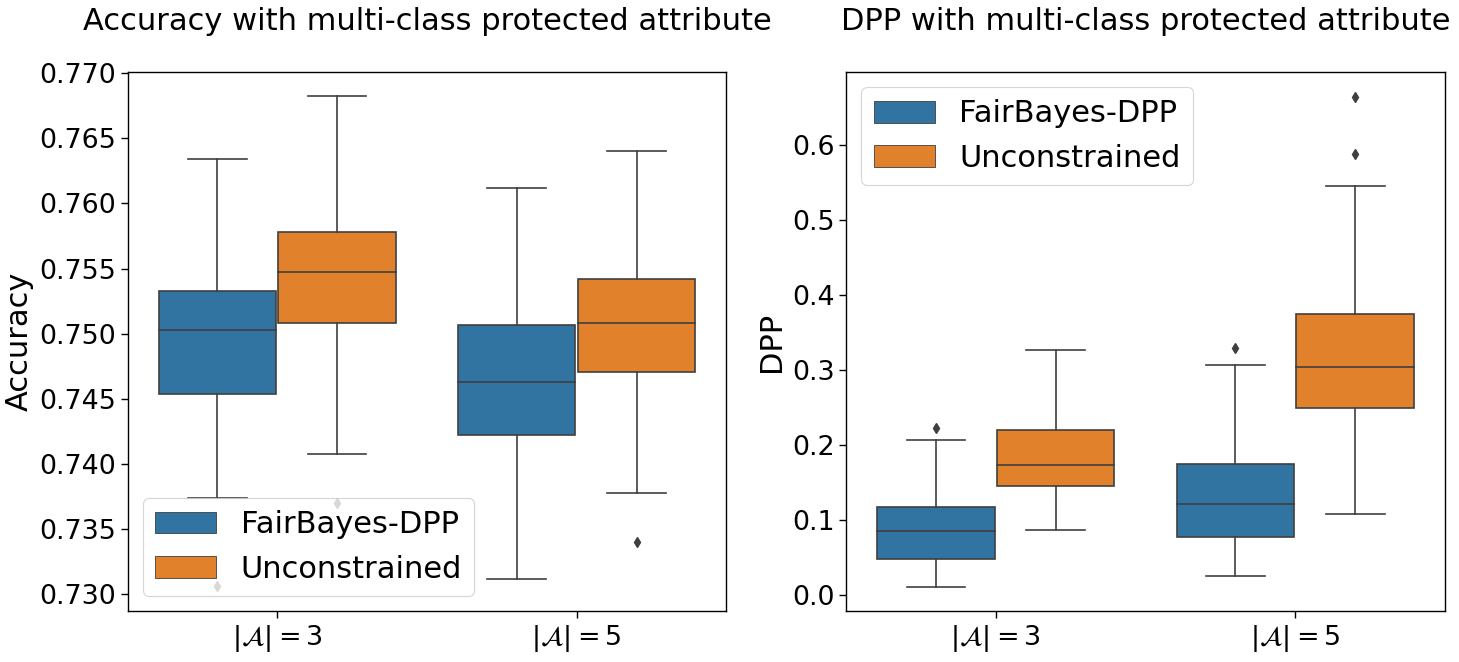}
        \vspace{-0.2cm}
    \caption{Accuracy and DPP with multi-class protected attribute.}
    \vspace{-0.3cm}
    \label{fig-multia}
\end{figure}

\subsection{CelebA Dataset}
In the main text, we have presented the simulation results for the first six attributes of the CelebA dataset. 
Here, we show the simulation results for the remaining 20 attributes in Table \ref{Table-celeba-re}. 
Again, we observe that FairBayes-DPP mitigates the gender bias effectively in most cases, and preserves model accuracy.

\begin{table}[ht]
 \caption{Per-attribute accuracy and DPP of the remaining 20 attributes from the CelebA dataset.} 
        \label{Table-celeba-re}
\begin{center}
\setlength{\tabcolsep}{9.45pt}
\renewcommand{\arraystretch}{1.1}
\begin{small}
\begin{sc}
    \begin{tabular}{l|cc|cc}\hline
 \multirow{3}[1]{*}{Attributes}   &      \multicolumn{2}{c|}{Per-attribute Accuracy}&      \multicolumn{2}{c}{Per-attribute DPP}\\\cline{2-5}
                      & FairBayes& Uncon- & FairBayes& Uncon- \\
                      & -DPP& strained & -DPP& strained \\\hline
 Black Hair           & 0.895(0.004)   & 0.899(0.003)   & 0.023(0.009)   & 0.033(0.013) \\
 Blond Hair           & 0.958(0.001)   & 0.959(0.001)   & 0.028(0.014)   & 0.119(0.042) \\
 Blurry               & 0.963(0.001)   & 0.963(0.001)   & 0.023(0.017)   & 0.047(0.017) \\
 Brown Hair           & 0.886(0.003)   & 0.889(0.004)   & 0.029(0.009)   & 0.078(0.028) \\
 Bushy Eyebrows       & 0.928(0.001)   & 0.926(0.001)   & 0.055(0.030)   & 0.166(0.038) \\
 Chubby               & 0.957(0.002)   & 0.957(0.002)   & 0.032(0.012)   & 0.043(0.026) \\
 Eyeglasses           & 0.996(0.000)   & 0.997(0.000)   & 0.010(0.005)   & 0.004(0.003) \\
 High Cheekbones      & 0.875(0.002)   & 0.876(0.002)   & 0.044(0.008)   & 0.143(0.016) \\
 Mouth Slightly Open  & 0.940(0.001)   & 0.940(0.001)   & 0.011(0.003)   & 0.017(0.008) \\
 Narrow Eyes          & 0.873(0.002)   & 0.875(0.003)   & 0.110(0.025)   & 0.063(0.026) \\
 Oval Face            & 0.756(0.002)   & 0.756(0.003)   & 0.033(0.016)   & 0.108(0.031) \\
 Pale Skin            & 0.970(0.001)   & 0.970(0.001)   & 0.059(0.040)   & 0.111(0.034) \\
 Pointy Nose          & 0.775(0.003)   & 0.774(0.003)   & 0.032(0.018)   & 0.063(0.022) \\
 Receding Hairline    & 0.939(0.001)   & 0.938(0.001)   & 0.067(0.019)   & 0.036(0.034) \\
 Smiling              & 0.928(0.001)   & 0.928(0.002)   & 0.021(0.005)   & 0.046(0.014) \\
 Straight Hair        & 0.842(0.002)   & 0.842(0.003)   & 0.056(0.007)   & 0.020(0.013) \\
 Wavy Hair            & 0.844(0.003)   & 0.847(0.003)   & 0.019(0.014)   & 0.087(0.021) \\
 Wearing Earrings     & 0.889(0.027)   & 0.908(0.001)   & 0.075(0.050)   & 0.207(0.037) \\
 Wearing Hat          & 0.991(0.000)   & 0.991(0.000)   & 0.012(0.013)   & 0.047(0.018) \\
 Wearing Necklace     & 0.868(0.002)   & 0.868(0.001)   & 0.077(0.047)   & 0.069(0.052) \\\hline
  \end{tabular}
  \end{sc}
\end{small}
\end{center}
  \vspace{-0.3cm}
\end{table}

\end{document}